\newcommand{\I}[1]{\mathbb{I}\left[#1\right]}       
\newcommand{\A}{\mathcal{A}}    
\newcommand{\reals}{{\mathbb{R}}}
\def\epsilon{\varepsilon}
\def\zero{\mathbf{0}}
\def\dist{\textnormal{\texttt{dist}}}
\def\reals{\mathbb{R}}
\def\X{\mathcal{X}}
\def\Y{\mathcal{Y}}
\def\A{\mathcal{A}}
\def\lb{\boldsymbol{\ell}}
\def\wv{\mathbf{w}}
\def\K{\mathcal{K}}
\def\I{\mathbb{I}}
\def\thb{\boldsymbol{\theta}}
\def\E{\mathbb{E}}
\def\approachset{\mathcal{S}^m_{\text{approach}}}
\title{Faster Recalibration of an Online Predictor via Approachability}
\author[1]{Princewill Okoroafor}
\author[1]{Robert Kleinberg}
\author[1]{Wen Sun}
\affil[1]{Cornell University, Ithaca, NY\protect\\
{\footnotesize \texttt{pco9@cornell.edu, rdk@cs.cornell.edu, ws455@cornell.edu}}}
\date{}
\begin{document}

\maketitle

\begin{abstract}
  Predictive models in ML need to be trustworthy and reliable, which often at the very least means outputting calibrated probabilities. This can be particularly difficult to guarantee in the online prediction setting when the outcome sequence can be generated adversarially. In this paper we introduce a technique using Blackwell’s approachability theorem for taking an online predictive model which might not be calibrated and transforming its predictions to calibrated predictions without much increase to the loss of the original model. Our proposed algorithm achieves calibration and accuracy at a faster rate than existing techniques \citep{kuleshov} and is the first algorithm to offer a flexible tradeoff between calibration error and accuracy in the online setting. We demonstrate this by characterizing the space of jointly achievable calibration and regret using our technique. 
\end{abstract}

\section{Introduction}\label{sec:intro}

In the online learning setting, a predictive model, also known as a forecaster, gives a probability value prediction at each time step, and its performance is evaluated based on a loss function. For the class of loss function known as a proper scoring rule, the only way to minimize that score is to predict the true probabilities of an outcome. 
For most prediction problems we do not know how to compute the true probabilities of outcomes, and the best we can do is to use a trained model (e.g., a deep neural network or contextual bandit algorithm) attaining a low scoring-rule loss without necessarily minimizing it.
However, most 
training methods for predictive models do not guarantee calibrated probability values. 
There has been a large body of work highlighting the need for calibrated probability estimates (i.e., models that are able to assess their uncertainty) \citep{10.1136/amiajnl-2011-000291, degroot1983comparison} and on how to obtain these calibrated probability estimates \citet{foster_proof_1999}. In the offline setting, this is generally done by some post-processing of the data to remap the probability values to calibrated probability estimates in a way that minimizes the increase in loss, such as by post-hoc calibration or recalibration. 
In contrast, in the online prediction setting, little work has been done on this subject. Recently, \citet{kuleshov} and \citet{calibeat} have presented various approaches for taking an online predictive model and transforming its predictions without major increase in loss. \citet{kuleshov} introduced this problem as an online recalibration problem, and provided an algorithm for achieving epsilon accuracy relative to the loss function using a connection between calibration and internal regret. 
In this paper, we show that their result can be significantly improved by using Blackwell’s Approachability Theorem. We present an algorithm, making use of approachability, that achieves recalibration at a much faster rate than the internal regret minimization algorithm by
\citet{kuleshov}. 
We also characterize the achievable amount of calibration and regret as a function of the time horizon; more precisely, 
we study for which exponents $a,b$ does there exist a forecasting algorithm that guarantees at most $T^a$ calibration error and no more than $T^b$ regret relative to scoring rule loss functions. We provide the first algorithm that offers a flexible tradeoff between calibration error and regret in the online setting. \pcodelete{We generalize our result to $L_p$ calibration
for $p \ge 1$. }

\subsection{Motivation}\label{sec:motivation}

\paragraph{Calibrating probability predictions} 
As the prevalence of machine learning systems in decision-making settings grows, it is essential that the predictions they provide are trustworthy, especially in applications where the confidence associated with the prediction is at least as important as the prediction itself. Neural networks have been found to be poor at assessing their own uncertainty \citep{pmlr-v70-guo17a}, and as a result, may output probability values that do not match the true probabilities of outcomes. This can have serious consequences; machine learning systems have been known to propagate unintended but harmful discrimination, as shown by \citet{pmlr-v81-buolamwini18a} for image classification and \citet{Bolukbasi} for natural language tasks. 
One proposed method for addressing the issue of assessing uncertainty is calibration \citep{pmlr-v80-hebert-johnson18a}. Calibration requires that the probability estimates from the ML model match the true distribution of the outcome; for example, for a binary class, if a model outputs a probability of 0.3 a certain number of times, the proportion of true outcomes should be 30 percent across the total instances when the model predicted 0.3. In the online setting, many works have proposed techniques for how to achieve calibrated probability estimates, even in the adversarial setting \citep{foster_proof_1999, mannor, pmlr-v19-abernethy11b}.

\paragraph{Limitations of calibration} While calibration is a useful property for online predictors to have, calibration is not sufficient and does not fully reflect domain specific knowledge. For example, consider two ML weather forecasters. Suppose the true outcome is that it rains 
once every two days. Forecaster 1 predicts 50 percent chance of rain every day, and Forecaster 2 predicts 0 percent chance of rain on the days it does not rain and 100 percent on the days it does. Observe that both of these forecasters are equally calibrated; however, the second forecaster is a better predictor of the likelihood of rain. Calibration does not capture this fact. Although calibration does not imply accuracy, accuracy does imply calibration, simply because being accurate requires an understanding of the outcome distribution.
This is why, in practice, proper scoring rules are used to assess the accuracy of predictions \citep{gneiting}. 

\paragraph{Incorporating expert/domain-specific knowledge in online prediction models} Forecaster 2 is an example of a forecaster that reflects domain-specific knowledge and is also calibrated. However, it is also possible for a forecaster that acts on domain specific knowledge to be poorly calibrated. 
Consider a third forecaster in the same weather prediction setting which predicts 20 percent chance of rain on the days it does not rain, and 80 percent chance of rain on the days that it does. This predictor is poorly calibrated, because it incurs a calibration error of 0.2 for every decision. However, compared to Forecaster 1, its predictions still reflect a domain-specific understanding of the probability distribution. The goal of our work is to take a model such as this third forecaster and transform its predictions in an online setting to achieve calibration while still making decisions that are informed by domain knowledge.

\subsection{Problem formulation}

In this paper, we focus on a class of loss functions known as strictly proper scoring rules. We refer the reader to Section~\ref{sec:scoring} for an introduction on the subject.

Consider an online prediction environment where the timing
of each round of the prediction process is as follows.
\begin{enumerate}
  \item An oracle reveals a prediction $q_t$.
  \item The algorithm must make a prediction $p_t$.
  \item The actual label $y_t \in \{0,1\}$ is revealed.
  \item The algorithm receives a score $S(p_t, y_t)$. 
\end{enumerate}
At the end of $T$ rounds, the following quantities are
calculated.
\begin{itemize}
  \item The forecaster's cumulative score is $S_f = \sum_{t=1}^T S(p_t,y_t)$.
  \item The oracle's cumulative score is $S_o = \sum_{t=1}^T S(q_t,y_t)$.
  \item The forecaster's 
  average regret is $\frac1T (S_f - S_o).$
  \item The forecaster's $\ell_1$-calibration error is
  \[  \sum_{p \in [0,1]} \left|
    \tfrac1T \sum_{t=1}^T (y_t - p) \cdot \mathbbm{1}_{p_t=p} \right| .
  \]
  (Although written as a sum over all $p \in [0,1]$, the sum is
  actually finite because there are only finitely
  many $p$ for which the summand is nonzero.)
\end{itemize}
For the sake of generality, our model makes no assumptions
about how the oracle's predictions are generated, except
that if the algorithm is randomized the oracle cannot 
anticipate the algorithm's \emph{future} coin-tosses.
This means, for example, that our simple prediction model 
subsumes more elaborate models in which the predictions 
$q_t$ are generated by a contextual bandit algorithm,
or by a pre-trained model such as a deep neural network,
using domain-specific features observed at time $t$ or
earlier.

Our work addresses the question: for which exponent
pairs $(a,b)$ is there a forecasting algorithm that guarantees
regret $\tilde{O}(T^a)$ and calibration error $\tilde{O}(T^b)$?
The purpose of this paper is to propose a method of tackling this question using Blackwell's Approachability Theorem.

\subsection{Our results}
For the notion of regret described above, we show that there is a family of approachability-based algorithms, parameterized by $\epsilon > 0$, that simultaneously achieves calibration $O(\epsilon + 1/\sqrt{\epsilon T} )$ and average regret $O(\epsilon^2 + 1/\sqrt{\epsilon T})$. This is a significant improvement from the result by \citet{kuleshov}, which achieves calibration $O(\epsilon  + 1/\sqrt{\epsilon^2 T})$ and average regret $O(\epsilon  + 1/\sqrt{\epsilon^2 T})$. The improved dependence on $\epsilon$ is significant in practice because it impacts how many samples, $T$,
are required in order to make the average regret less than some specified upper bound, $\delta$. For example, to make 
$\epsilon^2 + 1/\sqrt{\epsilon T}$ less than $\delta$
one would set $T = O(\delta^{-5/2})$ and 
$\epsilon = O(\delta^{1/2})$, whereas to 
make $\epsilon + 1/\sqrt{\epsilon^2 T}$ less 
than $\delta$ requires $T = O(\delta^{-4})$
and $\epsilon = O(\delta)$. 
For $\delta=0.1$ this amounts to the
difference between a few hundred samples
versus more than ten thousand.

By choosing $\epsilon$ 
appropriately, we show that our algorithm can be designed to achieve the best known calibration upper bound of $T^{-\frac{1}{3}}$ while limiting regret to no more than $T^{-\frac{1}{3}}$.
If one is more interested in minimizing regret, we also show that 
$\epsilon$ can be chosen to achieve regret of $T^{-\frac{2}{5}}$ while limiting calibration error to no more than $T^{-\frac{1}{5}}$. 
The algorithm allows for a linear interpolation between these two bounds. That is, 
for any $x$ in the interval $\left[ \frac13, \frac25 \right]$,
we can set $\epsilon=T^{-2x}$ to achieve calibration
$O(T^{2x-1})$ while simultaneously
achieving regret $O(T^{-x}).$

\subsection{Comparison to prior work}
\paragraph{Calibration and proper scoring rules}
\citet{foster_proof_1999} first reduced calibration to approachability. Since then, a number of alternative proofs of calibration using reductions to approachability have emerged \citep{mannor,pmlr-v19-abernethy11b}. 
\rdkedit{Our work draws ideas and techniques from these papers, and extends those ideas with innovations specific to the task of online recalibration.}
Unlike in the standard setting of calibrated binary sequence prediction, the recalibration problem 
incorporates side information in the form of an oracle who makes a prediction at each timestep. In the standard calibration problem, the goal is to minimize calibration error. In the recalibration problem, the algorithm must attain two goals simultaneously: sublinear calibration error and sublinear regret relative to the oracle's predictions.
To achieve both of these objectives we need to modify the vector payoffs and the approachable set used in the standard reduction from calibration to approachability. The main technical \rdkedit{innovation}
in this work lies in verifying that the modified set is indeed approachable in the modified vector-payoff game. After showing that the modified set is indeed approachable, we rely on a reduction from approachability to Online Linear Optimization by \cite{pmlr-v19-abernethy11b} to construct an algorithm for recalibration. The geometry of our approachable set leads to quantitative bounds on calibration error and regret that improve upon the state of the art.

\paragraph{Recalibration in offline setting}
In the offline setting, calibrated predictions are usually constructed using methods such as Platt Scaling \citep{platt} and isotonic regression \citep{niculescu}. 
In the context of binary classification, these methods reduce the problem of outputting calibrated predictions to a one-dimensional regression problem. 
Given data $\{(x_i, y_i) \}_{i=1}^n$, they train a model $f(s)$ to predict $p_i = f(s)$ from uncalibrated scores $s_i = g(x_i)$ produced by a classifier $g$. These techniques are particularly suited for the offline setting where the training and the calibration phases of the algorithm can be separated and thus, do not apply in the online setting and can fail when the test distribution does not match the training distribution. Our results, on the other hand, are robust to adversarial manipulations.

\paragraph{Recalibration in online setting} \label{pg:recalibration}
\citet{kuleshov} present an algorithm for recalibration, that is, for achieving $\epsilon$ calibration and $\epsilon$ regret simultaneously at a rate of $1/\epsilon \sqrt{T}$. They achieve this by running $1/\epsilon$ many calibration algorithms in parallel for each prediction interval that the expert 
(called ``oracle'' in our work, ``blackbox predictor'' in theirs)
makes. This method works because calibrated predictors have been shown to minimize internal regret \citep{cesa2006prediction}. They are able to bound the regret by the internal regret, which is bounded by calibration error, which itself is bounded by $\epsilon$. 
The two main issues with their approach are first, the additional cost of running $1/\epsilon$ calibration algorithms in parallel; and second, having to rely on the calibration error bound in order to bound the regret.
Our technique bypasses these constraints 
by appealing to Blackwell’s Approachability Theorem. With Blackwell’s Approachability Theorem, we can treat this problem as a vector-valued game where one tries to simultaneously minimize the calibration
and regret components of the vector. Instead of having $1/\epsilon$ different calibration algorithms, we have only a single calibration algorithm which also takes regret into account. The single calibration algorithm achieves a stronger guarantee by leveraging the fact that proper scoring rules incentivize calibration.
We also take this a step further by giving precise error bounds as a function of the time horizon, and allowing a trade-off between calibration error and regret. 

\paragraph{Online Minimax multiobjective optimization}
\rdkedit{%
An even more general problem than recalibration is online multiobjective optimization, for which \citet{multicalibeating} present a minimax theorem and a multiplicative-weights algorithm that achieves (a suitable notion of) the minimax value plus a sublinear regret term. By casting recalibration as an online multiobjective optimization problem, we show in
Appendix~\ref{sec:minimax-appendix} how to achieve 
calibration error and average regret both bounded by $O(\varepsilon + 1/\sqrt{\varepsilon T})$, for any
$\varepsilon > 0$. This matches the calibration error
bound for our Algorithm~\ref{alg:recalib}, 
but with a worse dependence 
on $\varepsilon$ in the regret bound. The reduction
from recalibration to online multiobjective optimization
uses loss vectors of dimension roughly $2^{1/\varepsilon}$,
so a na\"{i}ve implementation of the algorithm of
\citet{multicalibeating} would be computationally 
inefficient. In the Appendix, we 
indicate how it can be implemented to run in time 
$\mathrm{poly}(1/\varepsilon)$ per iteration 
by exploiting the special
structure of the loss vectors arising from our 
reduction. This running time is exponentially
faster than the na\"{i}ve reduction, but still
exponentially slower than the $O(\log(1/\varepsilon))$
running time per iteration of our
Algorithm~\ref{alg:recalib}.
}

\paragraph{Calibeating} \label{pg: calibeating}
Another closely related result is contained in a preprint by \citet{calibeat}. In their paper on “calibeating,” they present a method for transforming expert predictions to calibrated predictions, while measuring accuracy against an even more strict benchmark than ours: they compare the algorithm's loss to that of the expert after the calibration error has been removed, a benchmark called the ``refinement score''. They prove this for the loss function known as the Brier score, 
when calibration is quantified using the $\ell_2$ objective.
\rdkedit{Our result is incomparable to theirs: while their benchmark for accuracy is stricter than ours, our quantification of calibration (using $\ell_1$ rather than $\ell_2$) is stricter than theirs. Furthermore, our recalibration procedure applies to any strictly proper scoring rule loss, whereas their calibeating procedure is specialized to the Brier score.}

\section{Background}
\subsection{Calibration}
Let $y_1, y_2, \ldots \in \{0,1\}$ be a sequence of outcomes, and $p_1, p_2, \ldots \in [0,1]$ a sequence of probability predictions by a forecaster. We define for every $T$ and every 
pair $p,\epsilon$ where $0 \leq p \leq 1$ and $\epsilon > 0$,
the quantities
\begin{align*}
	n_T(p,\epsilon) &:= \sum_{t=1}^T \I [p_t \in (p-\epsilon/2,p+\epsilon/2)], \\
	\rho_T(p,\epsilon) &:= \frac{\sum_{t=1}^T y_t \I[p_t \in (p-\epsilon/2,p+\epsilon/2)]}
	{n_T(p,\epsilon)}.
\end{align*}
The quantity $\rho_T(p-\epsilon/2,p+\epsilon/2)$ should be interpreted as the empirical frequency of $y_t = 1$, up to round $T$, on only those rounds where the forecaster's prediction was ``roughly'' equal to $p$. The goal of calibration, of course, is to have this empirical frequency $\rho_T(p,\epsilon)$ be close to the estimated frequency $p$. To capture how close an algorithm $\A$ to being $\epsilon$-calibrated, we use a notion of rate below.

\begin{definition}
%
\newcommand{\probset}{\mathcal{P}}
    Let $\probset(\epsilon)$ denote the set of 
    midpoints of
    the intervals $[i \epsilon, (i+1) \epsilon]$
    for $i=0,1,\ldots,\lfloor \epsilon^{-1} \rfloor.$
  Let the $(\ell_1,\epsilon)$-calibration rate for forecaster $\A$ be

\begin{equation}
      C_{T}^{\epsilon}(\A) = \max \left\{ 0, \; \; \frac1T
      \left( \sum_{z \in \probset(\epsilon)} n_T(z,\epsilon) \cdot \left|  z - \rho_T(z,\epsilon) \right| \right)  - \frac{\varepsilon}{2} \right\}
\end{equation}
    We say that a forecaster is \emph{($\ell_1,\epsilon)$-calibrated} if $C_{T}^{\epsilon}(\A) = o(1)$. This in turn implies $\mathop{\lim\sup}_{T \to \infty} C_{T}^{\epsilon}(\A) = 0$.
  \end{definition}

\subsection{Proper Scoring Rules, Regret, and Recalibration}\label{sec:scoring}
\cite{kuleshov} define the problem of online 
recalibration in which the task is to
transform a sequence of uncalibrated forecasts $q_t$ into predictions $p_t$ that are calibrated and almost as accurate as the original $q_t$. They show that this objective is achievable if and 
only if the loss function used to measure forecast accuracy is 
a {\em proper scoring rule}, a term which we now define.

 Suppose there is a future event denoted by a random variable $X$ with a finite set $\Y$ of possible outcomes. For example: $\Y = \{ \text{rain}, \text{no rain} \}$. Let $\Delta_{\Y}$ be the set of probability distributions on $\Y$. An algorithm reports a probability distribution $p \in \Delta_{\Y}$, observes the outcome $y \in \Y$ and receives a score $S(p, y)$.
\begin{definition}
 A scoring rule is a function $S : \Delta_{\Y} \times \Y \mapsto \RR$. It is proper if accurately reporting the distribution of $X$ minimizes the expected score: that is, for all distributions $p,q \in \Delta_{\Y}$
 \begin{equation} \label{eq:scoring-rule}
     \E_{X \sim p} \left[ S(p,X) \right] \le
     \E_{X \sim p} \left[ S(q,X) \right] .
 \end{equation}
 Scoring rule $S$ is {\em strictly proper} if 
 Inequality~\eqref{eq:scoring-rule} is strict
 whenever $p \neq q$. 
\end{definition}
 Note that we adopt the convention that the scoring rule is a loss function rather than a payoff function, i.e.~$p$ is the unique probability that minimizes $S(\cdot,p)$ rather than maximizing it. 
    We extend $S$ to the domain $\Delta_{\Y} \times \Delta_{\Y}$
 by making it linear in the second variable. In other words,
 $S(q,p)$ is shorthand for $\E_{X \sim p} \left[ S(q,X) \right].$
 We assume the scoring rule $S$ is Lipschitz-continuous in its
 first variable, with Lipschitz constant $L_S$, i.e.
 \[
    \forall p,q \in \Delta_{\Y} \;
    \forall y \in \Y \qquad
    |S(p,y) - S(q,y)| \le L_S \cdot \| p-q \| ,
\]
where $\|p-q\|$ denotes the total variation distance
between $p$ and $q$.

We measure a forecaster's accuracy by 
comparing with the score of the oracle. Let $q_1, q_2, \ldots \in [0,1]$  be a sequence of probability predictions by an oracle.
\begin{definition} \label{def:regret}
  Let the regret at timestep $t$ for forecaster $\A$ be
    \[
      r(p_t, q_t, y_t) = S(p_t, y_t) - S(q_t, y_t)
    \]
    This leads to an \emph{average regret} of $R_T(\A) = \frac{1}{T} \sum_{t=1}^T r(p_t, q_t, y_t)$. We say that a forecaster has \emph{no-regret} if $R_T(\A) = o(1)$. This in turn implies $\mathop{\lim\sup}_{T \to \infty} R_T (\A) = 0$. We also say a forecaster has \emph{$\delta$-regret rate} if $R_T(\A) \leq \delta$.
\end{definition}


\begin{definition}\label{def:rate}
Let the $(\ell_1,\epsilon, \delta)$-recalibration rate for forecaster $\A$ be
\begin{align}
		C_{T}^{\epsilon, \delta}(\A) = \max \left\{0, C_{T}^{\epsilon}(\A), R_T(\A) - \frac{\delta}{2}\right\}
\end{align}
	We say that a forecaster is \pcoedit{\emph{($\ell_1,\epsilon, \delta)$-recalibrated}} if \pcoedit{$C_{T}^{\epsilon, \delta}(\A) = o(1)$}. This in turn implies \pcoedit{$\mathop{\lim\sup}_{T \to \infty} C_{T}^{\epsilon, \delta}(\A) = 0$}. 
\end{definition}
This definition is analogous to Definition 4 
in~\cite{kuleshov}, except that we have quantified
the calibration and accuracy using two parameters,
$\epsilon$ and $\delta$, whereas they use $\epsilon$
for both.

\subsection{Blackwell's Approachability Theorem}
Blackwell approachability \citep{blackwell} generalizes the problem of playing a repeated two-player zero-sum game to games whose payoffs are vectors instead of scalars. In a Blackwell approachability game, at all times $t$, two players interact in this order: first, Player 1 selects an action $x_t \in X$; then, Player 2 selects an action $y_t \in Y$; finally, Player 1 incurs the vector-valued payoff $u(x_t, y_t) \in \RR^d$. The sets $X , Y$ of player actions are assumed to be compact convex subsets of finite-dimensional vector spaces, and $u$ is assumed to be a biaffine function on $X \times Y$. Player 1's objective is to guarantee that the average payoff converges to some desired closed convex target set $\mathcal{S} \subseteq \RR^d$. Formally, given target set $\mathcal{S} \subseteq \RR^d$, Player 1's goal is to pick actions $x_1, x_2, \ldots \in X$ such that no matter the actions $y_1, y_2, \ldots \in Y$ played by Player 2,
\begin{equation} \label{eq:approach}
  \dist \left(\frac{1}{T} \sum_{t=1}^T u(x_t, y_t), \mathcal{S} \right)  \rightarrow 0 \quad \text{as} \quad T \rightarrow \infty
\end{equation}
The action $x_t$ is allowed to depend on the realized payoff
vectors $u_s(x_s,y_s)$ for $s=1,2,\ldots,t-1$.
We say the set $S$ is approachable if Player~1
has a strategy that attains the goal~\eqref{eq:approach}
no matter how Player 2 plays. Blackwell's Approachability
Theorem asserts that a convex set $\mathcal{S} \subset \reals^d$
is approachable if and only if every closed halfspace
containing $\mathcal{S}$ is approachable.
Henceforth we refer to 
this necessary and sufficient condition as 
{\em halfspace-approachability}.

In this paper, we shall adopt the notation, $\dist_p(x, \mathcal{S})$ to be the $\min_{s \in S} \left\| x - s\right\|_p$. We will refer to the $\ell_p$ ball $\in R^d$ of radius $r$ centered at the origin as $B_p^{d}(r)$.
\pcodelete{
Due to the nature of our vector payoff formulation, we will also use a compounded notation of distance: when $x = (x_1, x_2) \in \RR^d \times \RR$ and 
$\mathcal{S} = \mathcal{S}_1 \times \mathcal{S}_2 \subseteq \RR^d \times \RR$, we will write $\dist_p^r(x, \mathcal{S})$ to denote $||(\dist_p(x_1, \mathcal{S}_1), \dist_1(x_2, \mathcal{S}_2)) ||_r$. We will refer to the $\ell_p$ ball $\in R^d$ of radius $r$ centered at the origin as $B_p^{d}(r)$.
To specify the target convex set for our Blackwell's instance, we shall refer to the following set definitions.
\begin{definition} \label{def:sets}
  For a fixed $\epsilon, \delta, m > 0$ and distance metric $\ell_p$, let 
  \begin{equation}
    \mathcal{S}_p^m (\epsilon, \delta) =  \left\{  (x, z) \mid \ x \in \RR^{m+1}, z \in \RR \quad \text{s.t} \quad \norm{x}_p \leq \frac{\epsilon}{2}, \ z \leq \frac{\delta}{2} \right\}
  \end{equation}
  let 
  \begin{equation}
    \K_q^m =  \left\{  (a, b) \mid \ a \in \RR^{m+1}, b \in \RR \quad \text{s.t} \quad \norm{a}_q \leq 1, \ 0 \leq b \leq 1 \right\}
  \end{equation}
  Observe that $\mathcal{S}_p^m (\epsilon, \delta)$ can be thought of as $B_p^{m+1}(\epsilon/2) \oplus \left( -\infty, \frac{\delta}{2} \right]$. Similarly, $\K_q^m$ can be expressed as $B_q^{m+1}(1) \oplus \left[ 0, 1 \right]$
  \end{definition}
}

We now give an equivalent and alternative characterization of the definition of recalibration rate (Definition \ref{def:rate}): let the {\em recalibration vector} at time $T$ denoted $\mathbf{v}_T$ be given by: $\mathbf{v}_T = \mathbf{c}_T \ \oplus \ R_T$ where
$\mathbf{c}_T(i) = \frac{n_T(i\epsilon,\epsilon)}{T}\left(  i \epsilon - \rho_T(i\epsilon,\epsilon) \right)$ 
for $0 \le i \le \lceil \epsilon^{-1} \rceil,$ \text{and} $R_T = \frac{1}{T} \sum_{t=0}^T S(p_t, y_t) - S(q_t, y_t).$
\begin{lemma}\label{lem:rateisapproach}
\pcoedit{
\begin{equation}
  C_{T}^{\epsilon, \delta}(\A) = \max \left\{ \dist_1 \left( \mathbf{c}_T, B_1^{\epsilon^{-1}} (\epsilon/2) \right), R_t - \delta/2 \right\}
\end{equation}
}
\end{lemma}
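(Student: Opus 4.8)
The plan is to unwind both sides of the claimed identity against the definitions and check they agree term by term. Recall from Definition~\ref{def:rate} that
\[
  C_{T}^{\epsilon,\delta}(\A) = \max\left\{0,\; C_T^{\epsilon}(\A),\; R_T(\A) - \tfrac{\delta}{2}\right\},
\]
and from the definition preceding it that
\[
  C_T^{\epsilon}(\A) = \max\left\{0,\; \tfrac1T\left(\sum_{z \in \probset(\epsilon)} n_T(z,\epsilon)\,|z - \rho_T(z,\epsilon)|\right) - \tfrac{\epsilon}{2}\right\}.
\]
So the entire claim reduces to showing that the $\ell_1$-calibration term equals a distance from $\mathbf{c}_T$ to an $\ell_1$-ball, i.e.
\[
  \max\left\{0,\; \tfrac1T\textstyle\sum_{z} n_T(z,\epsilon)\,|z-\rho_T(z,\epsilon)| - \tfrac{\epsilon}{2}\right\}
  \;=\; \dist_1\!\left(\mathbf{c}_T,\; B_1^{\epsilon^{-1}}(\epsilon/2)\right),
\]
after which I would plug this into the outer $\max$ and match it against Definition~\ref{def:rate}, noting that the indices $z \in \probset(\epsilon)$ are exactly the midpoints $i\epsilon$ (up to the indexing convention $0 \le i \le \lceil \epsilon^{-1}\rceil$) so that $\sum_z n_T(z,\epsilon)|z-\rho_T(z,\epsilon)| = \sum_i |\mathbf{c}_T(i)| \cdot T = T\,\|\mathbf{c}_T\|_1$.

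The key geometric fact I would invoke is the elementary formula for the $\ell_1$-distance from a point to a centered $\ell_1$-ball: for any vector $v \in \reals^d$ and radius $r \ge 0$,
\[
  \dist_1\!\left(v,\; B_1^{d}(r)\right) = \max\{0,\; \|v\|_1 - r\}.
\]
This holds because scaling $v$ radially inward to the ball boundary (if $\|v\|_1 > r$) is the $\ell_1$-optimal projection; I would state and quickly justify this (the nearest point in the ball lies on the ray from the origin through $v$, and moving along that ray changes the $\ell_1$ norm linearly). Applying it with $v = \mathbf{c}_T$, $d = \epsilon^{-1}$ (i.e. $\lceil \epsilon^{-1}\rceil$), and $r = \epsilon/2$ gives $\dist_1(\mathbf{c}_T, B_1^{\epsilon^{-1}}(\epsilon/2)) = \max\{0, \|\mathbf{c}_T\|_1 - \epsilon/2\}$, which by the computation above equals $\max\{0,\, \tfrac1T\sum_z n_T(z,\epsilon)|z-\rho_T(z,\epsilon)| - \epsilon/2\} = C_T^{\epsilon}(\A)$. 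Since $C_T^\epsilon(\A) \ge 0$ already, the outer maximum $\max\{0, C_T^\epsilon(\A), R_T-\delta/2\}$ simplifies to $\max\{C_T^\epsilon(\A), R_T-\delta/2\} = \max\{\dist_1(\mathbf{c}_T, B_1^{\epsilon^{-1}}(\epsilon/2)),\, R_T - \delta/2\}$, which is exactly the right-hand side of the lemma.

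I do not expect a serious obstacle here; the result is essentially a restatement. The one point requiring care is bookkeeping around the indexing of prediction bins: $\probset(\epsilon)$ ranges over midpoints indexed $i = 0,\ldots,\lfloor\epsilon^{-1}\rfloor$, whereas $\mathbf{c}_T(i)$ is defined for $0 \le i \le \lceil\epsilon^{-1}\rceil$, so I would make sure the vector $\mathbf{c}_T$ and the ball $B_1^{\epsilon^{-1}}$ are taken to have the same ambient dimension and that any "extra" coordinate contributes $0$ (which it does, since $n_T$ of an unused midpoint is $0$). A second minor subtlety: the statement writes $R_t$ rather than $R_T$ on the right-hand side — I will treat this as the typo it evidently is and use $R_T(\A)$ throughout. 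With those conventions fixed, the proof is just the chain of equalities above.
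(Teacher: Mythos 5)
Your proof is correct. The paper in fact states Lemma~\ref{lem:rateisapproach} without supplying any proof, treating it as an immediate consequence of the definitions; the verification you give --- the identity $\dist_1(v, B_1^d(r)) = \max\{0, \|v\|_1 - r\}$ (nearest point obtained by radially rescaling $v$ onto the ball, with the lower bound from the triangle inequality $\|v-s\|_1 \ge \|v\|_1 - \|s\|_1$), combined with $\|\mathbf{c}_T\|_1 = \tfrac1T \sum_{z} n_T(z,\epsilon)\,|z - \rho_T(z,\epsilon)|$ and the observation that the outer $\max$ with $0$ is absorbed since $C_T^{\epsilon}(\A) \ge 0$ --- is precisely the argument the authors leave implicit. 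Your two bookkeeping caveats (the dimension/indexing of $\mathbf{c}_T$ versus $B_1^{\epsilon^{-1}}$, and reading $R_t$ as $R_T(\A)$) are well taken; note also that the paper's definition of the calibration rate sums over midpoints $(i+\tfrac12)\epsilon$ of the intervals $[i\epsilon,(i+1)\epsilon]$ while $\mathbf{c}_T(i)$ is centered at the grid points $i\epsilon$, an inconsistency in the paper's own notation rather than a gap in your argument.
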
  

\section{Recalibration via Approachability}
\label{sec:approachability}
We now describe the construction of the payoff game that allows us to reduce recalibration to approachability. This payoff game modifies the standard construction for calibration in \citep{foster_proof_1999, pmlr-v19-abernethy11b} by adding an additional dimension for regret. 

\subsection{Reduction}
For any $m \geq \sqrt{4L_s}$ where $L_s$ is the lipschitz constant of the scoring rule, we will show how to construct an $(\ell_1,\epsilon, \delta)$-recalibrated forecaster for $\epsilon = \frac{1}{m}$ and $\delta = \frac{4L_s}{m^2}$. On each round $t$, after observing the oracle's prediction $q_t$, a forecaster will randomly predict a probability $p_t \in \{0/m, 1/m, 2/m, \ldots, (m-1)/m, 1\}$, according to the distribution $\wv_t$, that is $\text{Pr}(p_t = i/m) = w_t(i)$. We define a vector-valued game. Let the player choose $\wv_t \in \X := \Delta_{m+1}$, and the adversary choose $y_t \in \Y := [0,1]$, and the payoff vector will be $\lb_t(\wv_t,y_t) = \cbb (\wv_t,y_t) \oplus r(\wv_t,q_t, y_t)$ \footnote{$\oplus$ represents concatenation} 
defined as follows:
\begin{align}\label{eq:calib_game}
&\cbb (\wv_t,y_t) := \\
 &\left( \wv_t(0)\left(y_t - \frac 0 m\right), \wv_t(1)\left(y_t - \frac 1 m\right), \ldots, \wv_t(m)(y_t - 1) \right)
\end{align}
\begin{align}\label{eq:regret_game}
r(\wv_t, q_t, y_t) := \sum_{i = 0}^m \wv_t(i) \left(S \left(\frac i m, y_t \right) - S(q_t, y_t)\right)
\end{align}
The set we wish to approach is 

\begin{align}\label{eq:approach_set}
   &\approachset \\
   &= \left\{  (x, z) \mid \ x \in \RR^{m+1}, z \in \RR \ \text{s.t} \ \norm{x}_1 \leq \frac{1}{m}, \ z \leq \frac{4L_s}{m^2} \right\}
\end{align}

In \rdkedit{Section~\ref{pg:recalibration}}, we pointed out that \cite{kuleshov}'s approach works by running $1/\epsilon = m$ many calibration algorithms in parallel, one for each prediction interval $[i/m, (i+1)/m]$. Each calibration algorithm solves a vector-valued game with payoff vectors of dimension $m$. Thus, their approach can also be interpreted as using a quadratic number of dimensions $(m^2)$ in the payoff vector while we show how to achieve the same low regret guarantee using a linear number of dimensions $(m+2)$. Our main technical contribution is that 
the lower-dimensional problem we formulate requires a novel proof of approachability, which our work supplies, whereas in
the higher-dimensional problem formulated implicitly by \cite{kuleshov} approachability follows ``for free'' due to a more general result by \citep{blum07a,cesa2006prediction}.

\subsection{Proof of Approachability}
For the calibration vector-payoff game, \cite{pmlr-v19-abernethy11b} prove approachability via response-satisfiability. While this is arguably the simplest way to prove approachability, it is important to note that to construct an algorithm for approaching the desired set, simply proving response-satisfiability is not enough. A halfspace oracle needs to be provided as well. Although \cite{pmlr-v19-abernethy11b} prove approachability by response-satisfiability, they present a halfspace oracle based on the construction in Foster’s halfspace-approachability proof. For our recalibration problem, we prove approachability by showing halfspace-approachability. Our proof is constructive, hence it directly yields a halfspace oracle.

\begin{theorem} \label{thm:approach}
For the vector-valued game defined in \ref{eq:calib_game}, the set $\mathcal{S} = \approachset$ is approachable. That is, any halfspace $H$ containing $\mathcal{S}$ is approachable.
\end{theorem}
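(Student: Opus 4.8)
The plan is to verify Blackwell's halfspace‑approachability criterion directly, in a way that also produces the halfspace oracle needed downstream. Recall that $\mathcal{S}=\approachset$ is approachable iff every closed halfspace $H\supseteq\mathcal{S}$ is, and that a single halfspace is approachable precisely when Player~1 has a one‑round response forcing the payoff into it, i.e.\ some $\wv^\star\in\Delta_{m+1}$ with $\lb(\wv^\star,y)\in H$ for all $y\in[0,1]$. So fix $H\supseteq\mathcal{S}$ and write $H=\{(x,z)\in\reals^{m+1}\times\reals:\iprod{\alpha}{x}+\beta z\le c\}$. Since $\mathcal{S}$ is unbounded in the $-z$ direction, $\mathcal{S}\subseteq H$ forces $\beta\ge 0$, and taking the supremum of $\iprod{\alpha}{x}+\beta z$ over $(x,z)\in\mathcal{S}$ yields the inequality we will want at the end:
\[
 c\ \ge\ \tfrac1m\|\alpha\|_\infty+\tfrac{4\beta L_S}{m^2}.
\]

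Next, reduce the ``for all $y$'' requirement to a one‑dimensional minimax statement. For fixed $\wv$, setting $h_i(y):=\alpha_i\bigl(y-\tfrac im\bigr)+\beta\bigl(S(\tfrac im,y)-S(q,y)\bigr)$, we have $\iprod{\alpha}{\cbb(\wv,y)}+\beta\, r(\wv,q,y)=\sum_{i=0}^m\wv(i)\,h_i(y)$, and each $h_i$ is affine in $y$ because $S$ was extended linearly in its second argument. Hence this expression is bilinear on the compact convex domain $\Delta_{m+1}\times[0,1]$, and Sion's minimax theorem gives
\[
 \min_{\wv\in\Delta_{m+1}}\ \max_{y\in[0,1]}\ \sum_{i=0}^m\wv(i)h_i(y)\ =\ \max_{y\in[0,1]}\ \min_{0\le i\le m}h_i(y).
\]
So a good response $\wv^\star$ exists iff $\max_{y\in[0,1]}\min_i h_i(y)\le c$, and it suffices to prove $\min_i h_i(y)\le c$ for each individual $y\in[0,1]$.

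Given $y$, take the grid point $i^\star/m$ closest to $y$, so that $|y-i^\star/m|\le \tfrac1{2m}$; then the first term of $h_{i^\star}(y)$ is at most $\|\alpha\|_\infty/(2m)$. For the second term, properness (Inequality~\eqref{eq:scoring-rule}) gives $S(q,y)\ge S(y,y)$, so, using $\beta\ge 0$, $\beta\bigl(S(\tfrac{i^\star}{m},y)-S(q,y)\bigr)\le\beta\bigl(S(\tfrac{i^\star}{m},y)-S(y,y)\bigr)$. The key — and most delicate — step is to bound $S(p,y)-S(y,y)$ \emph{quadratically} in $|p-y|$, which is exactly what forces the regret budget $\delta=4L_S/m^2$. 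Let $G(p):=S(p,p)$ be the Bayes‑risk function; properness makes $G$ concave with supergradient $G'(p)=S(p,1)-S(p,0)$, and $S(p,y)-S(y,y)=G(p)+G'(p)(y-p)-G(y)$ is the associated (non‑negative) Bregman quantity. Since $G'$ is $2L_S$‑Lipschitz — immediate from the Lipschitz assumption on $S$ — the function $G$ is $2L_S$‑smooth, so $S(p,y)-S(y,y)\le L_S(y-p)^2\le L_S/(4m^2)$. Combining the two estimates,
\[
 h_{i^\star}(y)\ \le\ \frac{\|\alpha\|_\infty}{2m}+\frac{4\beta L_S}{m^2}\ \le\ c,
\]
and since $y\in[0,1]$ was arbitrary, $\mathcal{S}$ is approachable.

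Finally, constructiveness: the witnessing $\wv^\star$ is the minimax strategy of the $(m+1)\times 2$ matrix game with rows $\bigl(h_i(0),h_i(1)\bigr)$ — by complementary slackness it is supported on the indices minimizing $h_i$ at the maximizing $y$ — and is obtained by solving a small linear program, which is precisely the halfspace oracle the recalibration algorithm consumes. I expect the only genuinely nontrivial ingredient to be the quadratic scoring bound of the previous paragraph; everything else is bookkeeping around Blackwell's criterion and the minimax theorem.
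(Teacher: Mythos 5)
Your proof is correct, but it reaches the conclusion by a genuinely different route from the paper's. The paper, after the same characterization of the halfspaces containing $\approachset$ (the constraints $b \ge 0$ and $\theta \ge \tfrac{\|a\|_\infty}{m} + \tfrac{4bL_s}{m^2}$), normalizes so that $\|a\|_\infty = 1$ and proves Lemma~\ref{lem:smallest_halfspace} by a constructive plane-geometry argument: it tracks the two-dimensional payoff points $F_0,\dots,F_m$ along tangent lines to the convex Bayes-risk curve $D(p)$, locates a sign change, and exhibits an explicit one- or two-point distribution whose expected payoff lands in the target quadrant. You instead swap the order of quantifiers with Sion/von~Neumann (legitimate, since the payoff is linear in $\wv$ and affine in $y$ on compact convex domains) and then verify the pointwise bound $\min_i h_i(y) \le c$ by playing the nearest grid point and invoking a quadratic bound $S(p,y)-S(y,y) \le L_S(y-p)^2$ from $2L_S$-smoothness of the concave Bayes risk $G(p)=S(p,p)$. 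This is essentially the paper's own Lemma~\ref{lem:a=0} (which handles the $a=\mathbf{0}$ case by exactly this minimax-plus-nearest-gridpoint device) extended to absorb the calibration term as well; note the paper obtains the analogous quadratic bound by the purely algebraic symmetrization $[S(x,y)-S(y,y)]+[S(y,x)-S(x,x)] = (x-y)[S(x,0)-S(y,0)+S(y,1)-S(x,1)]$, which avoids any supergradient/differentiability bookkeeping and which you could substitute for your smoothness argument. What your route buys is brevity and a uniform treatment of all halfspaces; what it gives up is the efficient oracle: the paper's geometric construction yields the $O(\log m)$ binary-search halfspace oracle used in Theorem~\ref{thm:algorithm}, whereas extracting $\wv^\star$ from your minimax argument requires solving the $(m{+}1)\times 2$ matrix game, which is only $O(m)$ per round. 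For the approachability statement itself, your argument is complete.
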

\begin{proof}
First we characterize the set of halfspaces containing $S$. Let $H$ be a halfspace of $\RR^{m+2}$ defined by the equation $\langle a, x \rangle + bz \leq \theta$ for $x \in \RR^{m+1}, z \in \RR$. We claim that $\mathcal{S} \subseteq \mathcal{H}$ iff $b \geq 0$ and $\theta \geq \left( \frac{\norm{a}_\infty}{m} + \frac{4bL_s}{m^2} \right)$.
To see this, observe that for $\mathcal{H}$ to contain $S$. It must be the case that  
\[
\max \left\{ \langle a, x \rangle + bz  \mid \ \norm{x}_1 \leq \frac{1}{m}, z \leq \frac{4L_s}{m^2} \right\} \leq \theta
\]
First, we need $b \geq 0$, since we can choose $z$ to violate this constraint otherwise. Secondly, we need $\theta \geq \left( \frac{\norm{a}_\infty}{m} + \frac{4bL_s}{m^2} \right)$, since we can choose $x$ and $z$ to violate this constraint otherwise. Thus, if $\mathcal{S} \subseteq H$, then both conditions $b \geq 0$ and $\theta \geq \left( \frac{\norm{a}_\infty}{m} + \frac{4bL_s}{m^2} \right)$ must hold for $H$. 
Conversely, if both conditions $b \geq 0$ and $\theta \geq \left( \frac{\norm{a}_\infty}{m} + \frac{4bL_s}{m^2} \right)$ hold for $H$, then $\mathcal{S} \subseteq H$. This is because for any $(x, z) \in \mathcal{S}$, $\langle a, x \rangle + bz \leq \left( \frac{\norm{a}_\infty}{m} + \frac{4bL_s}{m^2} \right) \leq \theta$ and if $b < 0 $, we can obtain a contradiction by choosing $z < -\frac{\theta}{b}$.

WLOG, we will assume $\theta = \left( \frac{\norm{a}_\infty}{m} + \frac{4bL_s}{m^2} \right)$, since approachability of a halfspace defined by $\langle a, x \rangle + bz \leq \left( \frac{\norm{a}_\infty}{m} + \frac{4bL_s}{m^2} \right)$ implies approachability of  $\langle a, x \rangle + bz \leq \theta$ for $\theta \geq \left( \frac{\norm{a}_\infty}{m} + \frac{4bL_s}{m^2} \right)$. 
That is, we will only concern ourselves with proving halfspace-approachability for halfspaces such that $\theta = \left( \frac{\norm{a}_\infty}{m} + \frac{4bL_s}{m^2} \right)$. 
For a halfspace such that $a = \zero$, we follow the halfspace oracle in \ref{halfspace_equation} and set $a_i = 0$ for all $i$. This gives us regret at most $\frac{4 L_s}{m^2}$; see proof of \ref{lem:smallest_halfspace} in the appendix.
If $a \neq \zero$, then we can consider the halfspace normalized by $\norm{a}_\infty$, that is, the halfspace defined by $a' = \frac{a}{\norm{a}_\infty}, b' = \frac{b}{\norm{a}_\infty}$ and $\theta = \frac{1}{m} + \frac{4b'L_s}{m^2}$. Since $\norm{a'}_\infty = 1$ and $b' \geq 0$, by Lemma \ref{lem:smallest_halfspace}, this halfspace is approachable. Consequently, any halfspace containing $\mathcal{S}$ is approachable.
\end{proof}
 
\begin{restatable}{lemma}{approachresult}\label{lem:smallest_halfspace}
Consider a pair $(a,b) \in \mathbb{R}^{m+1} \times \mathbb{R}$ such that $\|a\|_{\infty} = 1$ and $b \geq 0$. The halfspace $H_1$, defined below, is approachable.
\begin{equation}
H_1 := \left\{ (x, z) \in \mathbb{R}^{m+1} \times \mathbb{R} \ \bigg| \ \langle a, x \rangle + bz \leq \frac{1}{m} + \frac{4bL_s}{m^2} \right\}
\end{equation}
\end{restatable}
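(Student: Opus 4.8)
The plan is to establish Blackwell's one‑shot condition for $H_1$ — i.e.\ exhibit a single Player‑1 action $\wv^\star\in\Delta_{m+1}$ whose payoff lies in $H_1$ against every adversary move — and to get that existence from the minimax theorem once a per‑grid‑point inequality is verified. Abbreviate $q=q_t$ and write $\theta=\frac1m+\frac{4bL_s}{m^2}$ for the offset of $H_1$. If $\wv^\star$ satisfies $\langle a,\cbb(\wv^\star,y)\rangle+b\,r(\wv^\star,q,y)\le\theta$ for all $y\in[0,1]$, then playing $\wv^\star$ every round in the vector‑payoff game of \eqref{eq:calib_game}--\eqref{eq:regret_game} keeps the average payoff in the convex set $H_1$, so $H_1$ is approachable. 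Expanding, this inner product equals $\sum_{i=0}^m\wv^\star(i)\,h_i(y)$ with $h_i(y):=a_i\big(y-\tfrac im\big)+b\big(S(\tfrac im,y)-S(q,y)\big)$, and since $S$ is linear in its (extended) second argument each $h_i$ is affine in $y$. Hence $\max_{y\in[0,1]}\sum_i\wv(i)h_i(y)$ is attained at $y\in\{0,1\}$, so $\min_{\wv}\max_y\sum_i\wv(i)h_i(y)$ is the value of a finite two‑player zero‑sum game and von Neumann's minimax theorem gives $\min_{\wv\in\Delta_{m+1}}\max_{y\in[0,1]}\sum_i\wv(i)h_i(y)=\max_{y\in[0,1]}\min_{0\le i\le m}h_i(y)$. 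So it suffices to show that for every $y$ there is a grid point $i$ with $h_i(y)\le\theta$.

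For fixed $y$, I would take $i=i_y$ to be the grid point of $\{0,\tfrac1m,\dots,\tfrac{m-1}m,1\}$ nearest to $y$, so $\big|y-\tfrac{i_y}m\big|\le\frac1{2m}$. The calibration part is then immediate: $a_{i_y}\big(y-\tfrac{i_y}m\big)\le\|a\|_\infty\cdot\frac1{2m}=\frac1{2m}$. For the regret part, decompose $S(\tfrac{i_y}m,y)-S(q,y)=\big(S(\tfrac{i_y}m,y)-S(y,y)\big)+\big(S(y,y)-S(q,y)\big)$; the second bracket is $\le 0$ by properness, and for the first I would invoke the standard representation $S(p,y)=G(p)+(y-p)\gamma(p)$, where $G(p)=S(p,p)$ is concave and $\gamma(p)=S(p,1)-S(p,0)$ is a supergradient of $G$ at $p$, to write $S(p,y)-S(y,y)=\int_p^y\big(\gamma(p)-\gamma(t)\big)\,dt$. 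Because $\gamma$ is nonincreasing (concavity of $G$) and $2L_s$‑Lipschitz (each of $S(\cdot,0)$ and $S(\cdot,1)$ is $L_s$‑Lipschitz, and total‑variation distance on $\{0,1\}$ equals $|p-p'|$), this integral is at most $L_s|p-y|^2\le\frac{L_s}{4m^2}$. Since $b\ge0$, $h_{i_y}(y)\le\frac1{2m}+\frac{bL_s}{4m^2}\le\theta$, completing the argument; the minimizing $\wv^\star$ exists by compactness and serves as the halfspace oracle (and can be described explicitly — there is an optimal one supported on at most two grid points, since the adversary's game has only the two pure actions $y\in\{0,1\}$).

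The one genuinely non‑routine step is the curvature bound $0\le S(p,y)-S(y,y)\le L_s|p-y|^2$ for an arbitrary $L_s$‑Lipschitz proper scoring rule on a binary outcome; the reduction to a per‑grid‑point inequality, the affineness in $y$, the minimax theorem, and the $\|a\|_\infty=1$ estimate are all standard. Proving it cleanly uses the Savage/Schervish representation, the fact that the supergradient map $\gamma$ of the generalized‑entropy function $G$ is monotone and $2L_s$‑Lipschitz, and a little care at points where $G$ is non‑differentiable (integrating the monotone, a.e.-differentiable $\gamma$ suffices). This quadratic‑in‑$|p-y|$ bound — as opposed to the linear bound one gets from Lipschitzness alone — is exactly what makes the rounding‑to‑the‑grid regret $O(L_s/m^2)$, and hence what lets $\approachset$ carry only the $\tfrac{4L_s}{m^2}$ offset in its regret coordinate; with the weaker bound, $H_1$ would need offset $\tfrac1m+O(bL_s/m)$ and the downstream recalibration rates would degrade.
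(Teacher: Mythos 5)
Your proof is correct, but it takes a genuinely different route from the paper's. The paper argues geometrically: it plots the points $F_i=(f(i,0),f(i,1))$, shows they lie on successive tangent lines to the convex divergence curve $\{D(p)\}$, locates either an $F_i$ in the third quadrant or a consecutive pair $F_j,F_{j+1}$ straddling the diagonal $x_0=x_1$, and explicitly builds a two-point mixture on $\{j/m,(j+1)/m\}$ whose expectation lands in the target quadrant. You instead invoke the minimax theorem to swap the order of play and reduce the halfspace condition to a pointwise statement --- for each $y\in[0,1]$ some single grid point $i$ has $h_i(y)\le\theta$ --- which you verify at the nearest grid point, using $|a_{i_y}|\le\|a\|_\infty=1$ for the calibration term and the quadratic divergence estimate $0\le S(p,y)-S(y,y)\le L_s|p-y|^2$ for the regret term. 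That curvature bound is sound (the paper's own Lemma~\ref{lem:a=0} proves essentially the same estimate in product form, $(x-y)[S(x,0)-S(y,0)+S(y,1)-S(x,1)]\le 2L_s|x-y|^2$, and uses exactly your minimax device to handle the $a=\mathbf{0}$ case); your real observation is that the same one-point response also absorbs a general $a$ with $\|a\|_\infty=1$, because the calibration term at the nearest grid point costs only $\tfrac{1}{2m}$. This gives a shorter, unified proof with a slightly better constant ($\tfrac{1}{2m}+\tfrac{bL_s}{4m^2}$ against the required $\tfrac1m+\tfrac{4bL_s}{m^2}$). What you give up is constructiveness: the paper's quadrant case analysis is precisely what powers the $O(\log m)$ binary-search halfspace oracle of Section~\ref{sec:algorithm} and guarantees the output distribution is supported on two \emph{consecutive} grid values, whereas your argument only asserts existence of $\wv^\star$ (recoverable by solving a $2\times(m+1)$ zero-sum LP --- polynomial, but not the advertised $O(\log m)$). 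For the lemma as stated, your proof is complete.
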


The full proof can be found in the appendix. We provide a proof sketch here. To show that $H_1$ is approachable, we will find a mixed distribution for the forecaster (i.e, a probability distribution over $p \in \{0/m, 1/m, 2/m, \ldots, (m-1)/m, 1\}$)	such that $\E_p \left[ \langle a, \cbb(p, y) \rangle + b r(p,q_t,y) \right] \leq \frac{1}{m} + \frac{4 b L_S}{m^2}$ for any $y \in \{0,1\}$. 
For simplicity, define  
\begin{align}\label{Ffunction}
    f (i, y) &= a_i \left(\frac{i}{m} - y \right) + b \left[ S \left(\frac{i}{m}, y \right) - S(q_t, y) \right]\\
    F_i &= \begin{bmatrix} f(i,0) \\ f(i,1) \end{bmatrix}
\end{align}
so our objective becomes to show that there exists a distribution $p$ over $\frac{i}{m} \in \{0, \ldots, m \}$ such that $\En_p f(i, y) \leq \frac{1}{m} + \frac{4 b L_S}{m^2}$ for $y \in \{0,1\}$, or equivalently that the vector $\En_p F_i$ belongs to the quadrant-shaped set $(-\infty,\frac1m + \frac{4 b L_S}{m^2}] \times (-\infty,\frac1m + \frac{4 b L_S}{m^2}]$. 
We will be choosing $p$ to be either a point-mass on 
$\frac{i}{m}$ for some $i$, or a distribution on two
consecutive values in the set $\{0,\frac1m, \frac2m, \ldots, 1\}.$ 
For $p \in [0,1]$ let $D(p)$ denote the vector corresponding to the scoring rule term in $F_i$. 
\[
    D(p) = b \cdot \begin{bmatrix}
        S(p,0) - S(q_t,0) \\ S(p,1) - S(q_t,1)
    \end{bmatrix}
\]
As a result of the fact that $S$ is a proper scoring rule, an important observation is that the curve formed by $D(p)$ is convex and its tangent lines are parallel to $\begin{bmatrix} p \\ p - 1 \end{bmatrix}$. Thus, $F_0,F_1,\ldots,F_m$ are points on a sequence of tangent lines to the convex curve formed by $D(p)$. Additionally, we can show that $F_0$ lies in the left half-plane while $F_m$ must belong to the lower half-plane. Thus, $F_0,F_1,\ldots,F_m$ are always in the second, third or fourth quadrants and lie on lines with slopes that are slowly changing from negative to positive. 
\begin{center}
\includegraphics[scale=0.4]{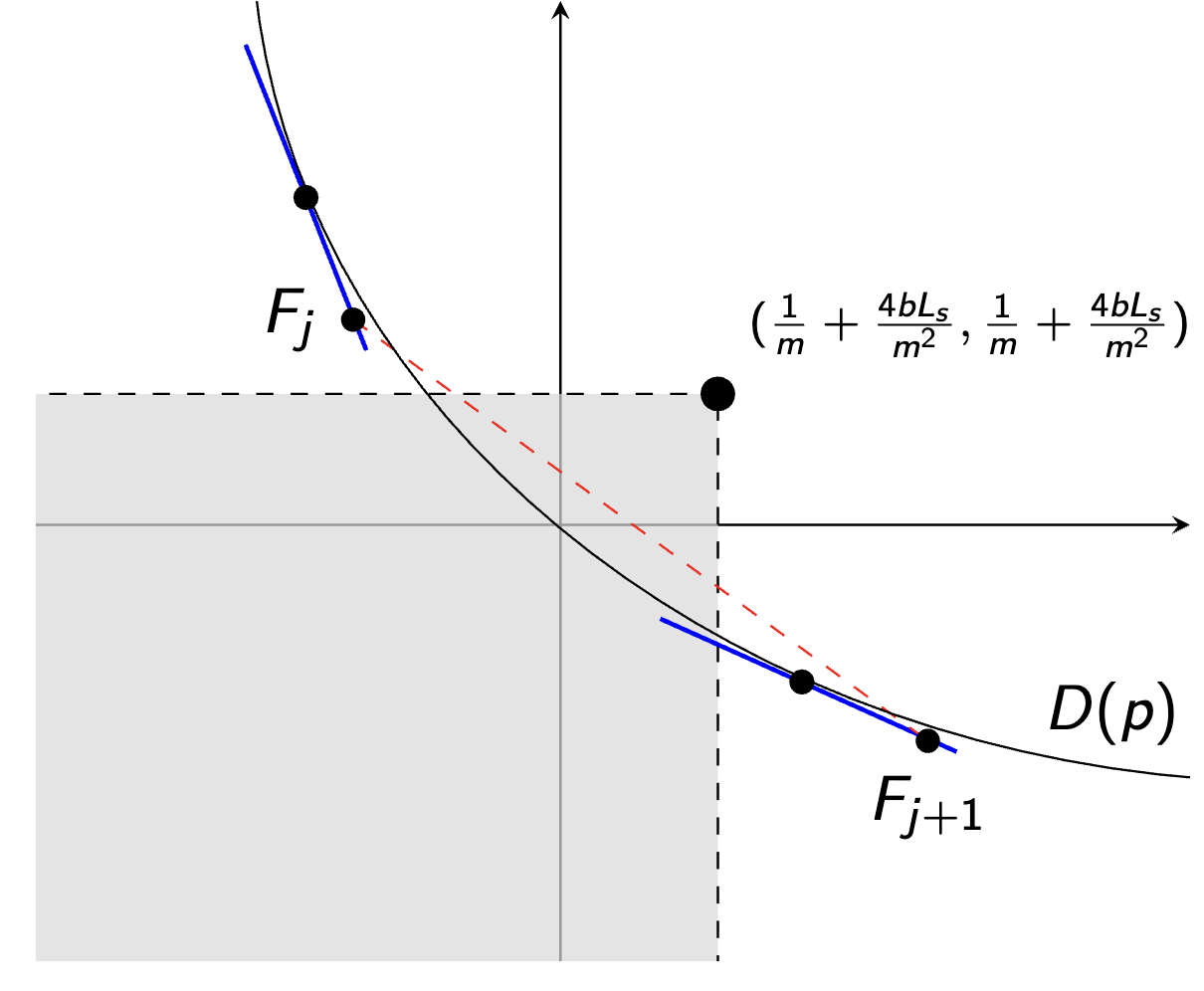}
\end{center}
If $F_i$ belongs to the third quadrant --- that is, the set $(-\infty,0] \times (-\infty,0]$ --- then we choose $p$ to be a point-mass on $i$. This guarantees that $\E_p F_i \leq \frac1m + \frac{4 b L_S}{m^2}$. Otherwise, there must be at least one index $j$ such that $F_j$ lies in the second quadrant while $F_{j+1}$ lies in the fourth quadrant. Using plane geometry, we show that the line segment joining $F_j$ and $F_{j+1}$ intersects
the set $(-\infty,\frac1m + \frac{4 b L_S}{m^2}] \times (-\infty,\frac1m + \frac{4 b L_S}{m^2}]$ as required. The rest of the proof can be found in the appendix.

\subsection{Efficient Algorithm via Online Linear Optimization}\label{sec:algorithm}
We now show how the results in the previous section lead to an efficient algorithm for online recalibration. The steps in this section are parallel to those in Section 5.2 of \cite{pmlr-v19-abernethy11b} but we have to repeat them because our payoff game and convex sets are different.

\begin{theorem} \label{thm:algorithm}
For any $m$, there exists a $(\ell_1,\frac{1}{m}, \frac{4L_s}{m^2})$-online recalibration algorithm that runs in time $O(\log m)$ per iteration and guarantees a recalibration rate of $O\left( \sqrt{\frac{m}{T}}\right) $
\end{theorem}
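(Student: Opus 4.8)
The plan is to convert the constructive halfspace-approachability of $\approachset$ established in Theorem~\ref{thm:approach} and Lemma~\ref{lem:smallest_halfspace} into an explicit forecaster by plugging the associated halfspace oracle into the approachability-to-online-linear-optimization reduction of \cite{pmlr-v19-abernethy11b} (their Section~5.2), and then performing the norm bookkeeping that turns the resulting approachability guarantee into a recalibration-rate bound via Lemma~\ref{lem:rateisapproach}. The first ingredient is the halfspace oracle itself, which I read off from the proof of Lemma~\ref{lem:smallest_halfspace}: given a normal direction $(a,b)$ with $b\ge 0$ and (after rescaling by $\norm{a}_\infty$) either $\norm{a}_\infty=1$ or $a=\mathbf 0$, it returns a distribution $\wv_t\in\Delta_{m+1}$ supported on at most two consecutive grid points such that $\langle a,\cbb(\wv_t,y)\rangle + b\,r(\wv_t,q_t,y)\le \frac{\norm{a}_\infty}{m}+\frac{4bL_s}{m^2}$ for both $y\in\{0,1\}$, hence for every $y\in[0,1]$ by linearity. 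The point needed for efficiency is that locating the correct support --- the index $j$ with $F_j$ in the second quadrant and $F_{j+1}$ in the fourth, or a single $F_i$ in the third quadrant --- is a \emph{binary search}, since (as in the proof sketch of Lemma~\ref{lem:smallest_halfspace}) the slopes of the tangent lines carrying $F_0,\dots,F_m$ increase monotonically; so the oracle runs in $O(\log m)$ time given $O(1)$-time access to individual coordinates $a_i$ and to $\norm{a}_\infty$.

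With the oracle in hand, I run online gradient descent over (a Euclidean unit ball inside) the polar of the homogenization of $\approachset$, exactly as in \cite{pmlr-v19-abernethy11b}, feeding it at round $t$ the realized payoff vector $\lb_t=\cbb(\wv_t,y_t)\oplus r(\wv_t,q_t,y_t)$, where $\wv_t$ is the oracle's response to the current iterate; the forecaster samples $p_t\sim\wv_t$. Two estimates drive the bound. First, the payoff vectors are short: the calibration block, being coordinates of a probability vector times numbers in $[-1,1]$, has $\ell_1$-norm at most $1$, and the regret coordinate has magnitude at most $L_s$ by Lipschitzness, so $\norm{\lb_t}_2=O(1)$; hence the optimizer incurs $O(\sqrt T)$ regret, which the reduction converts into $\dist_2(\frac1T\sum_t\lb_t,\approachset)=O(1/\sqrt T)$. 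Second, passing from $\ell_2$ to $\ell_1$ on the $(m{+}1)$-dimensional calibration block costs a factor $\sqrt{m+1}$, so the $\ell_1$-distance of the calibration block of the average payoff to its target $\ell_1$-ball, as well as the excess of its regret coordinate, are both $O(\sqrt{m/T})$. It remains to pass from the \emph{distributions} $\wv_t$ to the \emph{sampled} predictions $p_t$ that define $\mathbf v_T$: the corresponding coordinates differ by a bounded martingale, and since each $\wv_t$ is supported on $O(1)$ coordinates a Freedman/Azuma estimate bounds the resulting $\ell_1$ discrepancy on the calibration block, and the discrepancy in the regret coordinate, by $O(\sqrt{m/T})$ as well (in expectation, and up to logarithmic factors with high probability). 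Combining these with Lemma~\ref{lem:rateisapproach} yields $C_T^{\epsilon,\delta}(\A)=O(\sqrt{m/T})$ for $\epsilon=\tfrac1m$ and $\delta=\tfrac{4L_s}{m^2}$.

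For the per-iteration running time, a round costs one online-gradient-descent update plus one oracle call; the oracle is $O(\log m)$ by the first paragraph, provided its binary search can read a coordinate $a_i$ and the normalizer $\norm{a}_\infty$ cheaply. I maintain the running unprojected gradient sum lazily in a balanced search tree keyed by coordinate index: each $\lb_t$ is nonzero in only $O(1)$ coordinates (two from $\wv_t$, plus the regret coordinate), so the sum changes in $O(1)$ places per round and its squared Euclidean norm is updated in $O(1)$; the projected iterate is a global rescaling of this sum, so the coordinate of largest magnitude (needed for the $\ell_\infty$-normalization) is preserved under projection and can be tracked through $O(1)$ updates per round by a priority queue in $O(\log m)$ time, and any individual coordinate requested by the binary search is produced in $O(\log m)$. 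Hence the whole iteration runs in $O(\log m)$.

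The conceptual core here --- that $\approachset$ is approachable, together with an explicit halfspace oracle for it --- is already supplied by Theorem~\ref{thm:approach} and Lemma~\ref{lem:smallest_halfspace}, and the approachability-to-online-linear-optimization reduction is off the shelf; so the remaining work is (i) the bookkeeping that collapses an $O(1/\sqrt T)$ Euclidean approachability guarantee onto the $\ell_1$-calibration-plus-regret quantity of Lemma~\ref{lem:rateisapproach} with exactly the stated $\sqrt m$ overhead, together with the distribution-versus-sample concentration step, and (ii) bringing the per-round cost down to $O(\log m)$, which a black-box invocation of the reduction does not deliver. I expect (ii) to be the harder part: the payoff vectors live in $\RR^{m+2}$, so obtaining genuinely sublinear-in-$m$ per-round time requires both the binary-search structure of the oracle from Lemma~\ref{lem:smallest_halfspace} and a lazy, sparsity-exploiting implementation of the optimizer, neither of which is automatic.
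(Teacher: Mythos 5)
Your proposal follows essentially the same route as the paper's: the halfspace oracle read off from the constructive proof of Lemma~\ref{lem:smallest_halfspace}, located by binary search in $O(\log m)$ time, is composed with the approachability-to-online-linear-optimization reduction of \citet{pmlr-v19-abernethy11b} with online gradient descent as the optimizer, and the resulting guarantee is translated into a recalibration rate via Lemma~\ref{lem:rateisapproach}. The one place you genuinely diverge is in where the $\sqrt{m}$ enters: you run the reduction against the Euclidean dual ball, obtain $\dist_2\bigl(\tfrac1T\sum_t \lb_t, \approachset\bigr) = O(1/\sqrt{T})$, and pay a factor $\sqrt{m+2}$ converting $\ell_2$ to $\ell_1$ at the end, whereas the paper targets $\dist_1$ directly by optimizing over the $\ell_\infty$-shaped dual set $\K$ (Lemma~\ref{lem:dualsetopt}) and pays the same $\sqrt{m}$ through the $\ell_2$ diameter of $\K$ in the OGD regret bound $DG\sqrt{T}$; the two accountings are equivalent and yield the same $O(\sqrt{m/T})$. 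Two things you supply that the paper's writeup leaves implicit are actually to your credit: the martingale concentration step bridging the distributions $\wv_t$ (in terms of which the payoff vectors and the approachability guarantee are stated) and the sampled predictions $p_t$ (in terms of which $C_T^{\epsilon,\delta}$ is defined), and the sparsity-exploiting data structures needed to keep the OGD update and the oracle's coordinate accesses within $O(\log m)$ per round. One minor imprecision: the validity of the binary search rests on maintaining an interval whose left endpoint lies above the diagonal $x_0 = x_1$ and whose right endpoint lies below it (so a sign change survives in one half by the discrete intermediate value property), not on monotonicity of the tangent slopes per se; this does not affect correctness.
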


Following the steps of the reduction from Approachability to OLO outlined in \cite{pmlr-v19-abernethy11b}, we provide a convex set $\K$ whose elements correspond to halfspaces containing $\approachset$,
and express the distance of a loss vector to the set $\mathcal{S}$ we wish to approach as an optimization over the convex set $\K$. We do so in Lemma~\ref{lem:dualsetopt}. Then, we present an algorithm (halfspace oracle) such that given a halfspace $\thb_t \in \K$, it returns a distribution $\wv_t \in \Delta_{m+1}$ with the guarantee that $\langle \lb_t (\wv_t,y_t), \thb_t \rangle \leq \frac{1}{m} + \frac{4L_s}{m^2}$. Lastly, we present an algorithm for recalibration that uses Online Gradient Descent \cite{zinkevich_online_2003} to select the halfspace $\thb_t \in \K$ to approach at each timestep.

We define the convex set $\K$ as follows
\begin{equation}
  \K := \left\{  (a, b) \mid \ a \in \RR^{m+1}, b \in \RR \ \text{s.t} \ \norm{a}_\infty \leq 1, \ 0 \leq b \leq 1 \right\}
\end{equation}
This is an appropriate choice of $\K$ due to Lemma \ref{lem:dualsetopt}, since it allows us to upper bound the distance to $S$ in terms of a linear optimization objective over the set $\K$.
\begin{restatable}{lemma}{dualsetopt}\label{lem:dualsetopt}
For any vector $\xb \in \RR^{m+2}$ such that $\| \xb_{1:m+1}\|_{1} \geq 1 / m$, and $| \xb_{m+2} | \geq \frac{4L_s}{m^2}$, 
\begin{equation}
  \pcoedit{\dist_1 \left( \xb, \approachset \right)} = -\frac{1}{m} - \frac{4L_s}{m^2} - \min_{\theta \in \K} \langle - \xb, \theta \rangle
\end{equation} \pcodelete{where $q$ is such that $\frac{1}{p} + \frac{1}{q} = 1$}
\end{restatable}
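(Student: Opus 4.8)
The plan is to prove Lemma~\ref{lem:dualsetopt} by explicitly computing the $\ell_1$-distance from $\xb$ to $\approachset$ and recognizing the resulting expression as a support-function optimization over $\K$. Recall that $\approachset = B_1^{m+1}(1/m) \oplus (-\infty, 4L_s/m^2]$, i.e.~a product of an $\ell_1$-ball in the first $m+1$ coordinates and a half-line in the last coordinate. So the $\ell_1$-projection decomposes coordinate-wise between the two blocks: $\dist_1(\xb, \approachset) = \dist_1(\xb_{1:m+1}, B_1^{m+1}(1/m)) + \dist_1(\xb_{m+2}, (-\infty, 4L_s/m^2])$. Under the stated hypotheses $\|\xb_{1:m+1}\|_1 \ge 1/m$ and $|\xb_{m+2}| \ge 4L_s/m^2$, each term simplifies: the first is $\|\xb_{1:m+1}\|_1 - 1/m$ (radial projection onto the $\ell_1$-ball in $\ell_1$-distance), and assuming $\xb_{m+2} \ge 4L_s/m^2$ (which the hypothesis together with nonnegativity context makes the relevant case; one should note the sign convention being used for the regret coordinate), the second is $\xb_{m+2} - 4L_s/m^2$. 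Hence $\dist_1(\xb, \approachset) = \|\xb_{1:m+1}\|_1 + \xb_{m+2} - 1/m - 4L_s/m^2$.

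Next I would express $-\min_{\theta \in \K} \langle -\xb, \theta\rangle = \max_{\theta \in \K} \langle \xb, \theta\rangle$, which is exactly the support function of $\K = B_\infty^{m+1}(1) \oplus [0,1]$ evaluated at $\xb$. Since $\K$ is also a product set, this support function splits as $\max_{\|a\|_\infty \le 1} \langle \xb_{1:m+1}, a\rangle + \max_{0 \le b \le 1} \xb_{m+2} b$. The first maximum is the dual-norm computation $\|\xb_{1:m+1}\|_1$ (the $\ell_1$-norm is dual to $\ell_\infty$), achieved by $a_i = \mathrm{sign}(\xb_i)$. The second maximum is $\xb_{m+2}$ when $\xb_{m+2} \ge 0$ (achieved at $b=1$), which holds under the hypothesis. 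Therefore $\max_{\theta\in\K}\langle \xb,\theta\rangle = \|\xb_{1:m+1}\|_1 + \xb_{m+2}$, and subtracting $1/m + 4L_s/m^2$ recovers precisely the distance formula from the previous paragraph, completing the proof.

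The only mild subtlety — and the step I'd be most careful about — is bookkeeping the sign conventions on the regret coordinate and confirming that the hypotheses $|\xb_{m+2}| \ge 4L_s/m^2$ and the implicit ``$\xb_{m+2} \ge 4L_s/m^2 > 0$'' regime are the ones actually invoked when this lemma is used (i.e.~the loss vector lies outside $\mathcal{S}$ on the side where the regret component is too large); if instead $\xb_{m+2}$ could be very negative, the half-line contributes $0$ to the distance but $\xb_{m+2}$ (negative) to the support function, so the identity would fail and one would need $\max\{0,\cdot\}$-style truncation. The clean statement as written presumes we are in the ``both blocks violated'' regime, which is exactly what the two hypotheses encode, so no truncation is needed. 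Everything else is a routine application of: (i) distance to a product set splits additively in $\ell_1$; (ii) distance to an $\ell_1$-ball of radius $r$ from an exterior point is norm minus $r$; (iii) the support function of a product of convex bodies splits additively; (iv) $\ell_1$/$\ell_\infty$ duality. I would present these four facts as a short chain of equalities rather than belaboring any one of them.
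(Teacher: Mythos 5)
Your proof is correct and follows essentially the same route as the paper's: decompose the $\ell_1$-distance over the product structure $\approachset = B_1^{m+1}(1/m) \oplus (-\infty, 4L_s/m^2]$, identify each piece as a support-function/dual-norm computation over the corresponding factor of $\K$, and recombine. Your flagged subtlety is a genuine one the paper's proof only addresses with an informal remark: the identity does require $\xb_{m+2} \ge 4L_s/m^2$ (not merely $|\xb_{m+2}| \ge 4L_s/m^2$), since for large negative $\xb_{m+2}$ the half-line contributes $0$ to the distance while the support-function term does not compensate.
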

We defer the proof of Lemma~\ref{lem:dualsetopt} to the appendix. The usefulness of the lemma above is that it allows us to combine the approachability guarantee of \rdkedit{Theorem}~\ref{thm:approach} to upper bound the distance to the target convex set in terms of regret of an online linear optimization algorithm.
\begin{align}\label{eq:dist-to-olo}
  &\dist_1 \left( \frac{1}{T} \sum_{t=1}^T \lb_t (\wv_t, y_t), \approachset \right) \\
  &= -\frac{1}{m} - \frac{4L_s}{m^2} - \min_{\theta \in \K} \left\langle - \frac{1}{T} \sum_{t=1}^T \lb_t (w_t, y_t), \thb \right\rangle \\
  &\leq \frac 1 T \left( \sum_{t=1}^T \langle -\lb_t(\wv_t,y_t), \thb_t \rangle - \min_{\thb \in \K} \sum_{t=1}^T \langle -\lb_t(\wv_t,y_t), \thb \rangle\right) 
\end{align}
where the inequality follows from the approachability guarantee of \rdkedit{Theorem}~\ref{thm:approach}: for any halfspace $\theta_t$, there exists a distribution $\wv_t$ such that $\langle \lb_t (\wv_t, y_t), \theta_t \rangle \leq \frac{1}{m} + \frac{4L_s}{m^2}$ for any $y_t \in \{ 0,1 \}.$

\paragraph{The Halfspace Oracle: $\text{Approach}(\thb_{t+1})$}
Given any $\thb_t \in \K$, we must construct $\wv \in \Delta_{m+1}$ so that $\langle \lb_t (\wv_t,y_t), \thb_t \rangle \leq \frac{1}{m} + \frac{4L_s}{m^2}$ for any $y_t$. The proof of approachability for Lemma~\ref{lem:smallest_halfspace} is a constructive one and describes how to choose $\wv_t \in \Delta_{m+1}$ given $\thb_t$. Recall functions $f(i,y)$ and $F_i$ defined in \ref{Ffunction}. The algorithm firsts check if $F_0$ or $F_m$ is in the 3rd quadrant. If one of them is, then we output a point distribution at the corresponding probability value.
If none of $F_0$ or $F_m$ is in the 3rd quadrant, then we binary search for an index $i$ with $F_i$ in the 3rd quadrant or a pair of consecutive indices $j, j+1$ where $F_j$ is in 2nd quadrant and $F_{j+1}$ is in the 4th quadrant. In the first case, $wv_t(i) = 1$ and 0 everywhere else. In the second case, we set 
\begin{align}\label{halfspace_equation}
    \wv_t(j) = \frac{f(j+1,1) - f(j+1, 0)}{f(j, 0) - f(j+1, 0) - f(j,1) + f(j+1,1)} \\
    \wv_t(j+1) = \frac{f(j,0) - f(j, 1)}{f(j, 0) - f(j+1, 0) - f(j,1) + f(j+1,1)}
\end{align}
and 0 everywhere else. 
The correctness of this procedure follows from the proof of Lemma~\ref{lem:smallest_halfspace}. 
Note that $F_i$ does not need to be pre-computed for every index. It can be computed online during the binary search steps. Thus, this halfspace oracle can be implemented in $O(\log m)$ steps.

\paragraph{The Learning Algorithm: $\text{OGD}(\thb_t | l_t)$}
Similar to \citet{pmlr-v19-abernethy11b}, we use the Online Gradient Descent algorithm \citep{zinkevich_online_2003} as the learning algorithm. 
\begin{algorithm}[H]
	\caption{Online Recalibration Algorithm} \label{alg:recalib}
	\begin{algorithmic}
		\STATE Input: some natural number $m \geq \sqrt{4L_s}$
		\STATE Initialize: $\thb_1 = \mathbf{0}, \wv_1 \in \Delta_{m+1}$
		\FOR{$t=1, \ldots, T$}
    \STATE Observe $q_t$ from black-box prediction oracle
		\STATE Sample $i_t \sim \wv_t$, predict $p_t = \frac{i_t}{m}$, observe $y_t$ 
    \STATE Set $l_t := -\lb_t (\wv_t, y_t)$ 
		\STATE Query learning algorithm: $\thb_{t+1} \leftarrow \text{OGD}(\thb_t | l_t)$ \quad  // Online Gradient Descent step
		\STATE Query halfspace oracle: $\wv_{t+1} \leftarrow \text{Approach}(\thb_{t+1})$ \ // Obtain $\wv_{t+1} \in \Delta_{m+1}$ from $\thb_{t+1}$
		\ENDFOR
	\end{algorithmic}
\end{algorithm}

OGD guarantees that the regret is no more than $DG\sqrt{T}$ where $D$ is the $\ell_2$ diameter of the set and $G$ is the $\ell_2$-norm of the largest cost vector. For the convex set $\K$, the $\ell_2$ diameter is $O(\sqrt{m})$. The $\ell_2$-norm of the calibration component of the vector is bounded by $\sqrt{2}$. To make the size of the regret at time $t$ small and at most 1, we normalize by the lipschitz-constant $L_s$
\pcoedit{
\begin{align}
  C_{T}^{\epsilon, \delta}(\A) &\leq \dist_1 \left( \frac{1}{T} \sum_{t=1}^T \lb_t(\wv_t, y_t), \approachset \right) \\
  &\leq \frac{\text{Regret}_t}{T} 
  \leq \frac{GD}{\sqrt{T}} 
  = O\left( \sqrt{\frac{m}{T}}\right)
\end{align}
}
\pcodelete{
\begin{theorem}[$\ell_p$ generalization of Theorem \ref{thm:algorithm}] \label{thm:lp_algorithm}
  For any $m$, there exists a $(\ell_p, \frac{1}{m}, \frac{4L_s}{m^2})$-online recalibration algorithm that guarantees a recalibration rate of $O\left( \frac{m^{c}}{\sqrt{T}}\right)$ where $c = \left(\frac{1}{2} - \frac{1}{q} \right)$ for $q > 2$ and $c = 2$ for $q \leq 2$
\end{theorem}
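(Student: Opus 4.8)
To prove the $\ell_p$-generalization stated above, the plan is to replay the three-part development of Sections~\ref{sec:approachability} and~\ref{sec:algorithm} --- reduction to approachability, proof of approachability, and conversion to an Online Linear Optimization algorithm --- changing only the geometry of the calibration block from the $\ell_1$ ball to the $\ell_p$ ball and tracking where the conjugate exponent $q$ (with $\tfrac1p+\tfrac1q=1$) enters. Keep the payoff game \eqref{eq:calib_game}--\eqref{eq:regret_game} exactly as is; replace the target \eqref{eq:approach_set} by $\mathcal S_p:=\{(x,z)\in\RR^{m+1}\times\RR:\ \|x\|_p\le\tfrac1m,\ z\le\tfrac{4L_s}{m^2}\}$; and measure convergence in the product-split distance that adds the $\ell_p$-distance of the calibration block to the scalar distance of the regret coordinate, so that every identity below stays additive just as it is in the $\ell_1$ proof (this is where the $\ell_1$ proof silently used that $\ell_1$ is itself additive over a product).

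First I would redo the halfspace characterization in the proof of Theorem~\ref{thm:approach}. By H\"older, $\max\{\langle a,x\rangle:\|x\|_p\le\tfrac1m\}=\tfrac1m\|a\|_q$, so a halfspace $\langle a,x\rangle+bz\le\theta$ contains $\mathcal S_p$ iff $b\ge0$ and $\theta\ge\tfrac1m\|a\|_q+\tfrac{4bL_s}{m^2}$; after normalizing by $\|a\|_q$ this reduces, exactly as before, to approaching every halfspace $\langle a,x\rangle+bz\le\tfrac1m+\tfrac{4bL_s}{m^2}$ with $\|a\|_q=1$ and $b\ge0$. The crux --- the step I expect to carry the argument --- is that the halfspace oracle behind Lemma~\ref{lem:smallest_halfspace} already settles this verbatim: its construction (the point-mass / two-consecutive-masses rule driven by where $F_0,\dots,F_m$ from \eqref{Ffunction} sit relative to the quadrants) and its guarantee $\Ex{i\sim\wv_t}{f(i,y)}\le\tfrac1m+\tfrac{4bL_s}{m^2}$ for $y\in\{0,1\}$ only ever use $|a_i|\le1$ for each coordinate together with $b\ge0$ --- never that some $|a_i|$ equals $1$. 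Since $\|a\|_q=1$ forces $\|a\|_\infty\le1$ for every $q\ge1$, the oracle applies unchanged, and the $a=\mathbf 0$ case is identical to the one in Theorem~\ref{thm:approach}; hence every halfspace containing $\mathcal S_p$ is approachable, and so is $\mathcal S_p$.

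Finally I would rerun the reduction of Section~\ref{sec:algorithm} against the dual convex set $\K_q:=\{(a,b):\|a\|_q\le1,\ 0\le b\le1\}$. The analogue of Lemma~\ref{lem:dualsetopt} goes through with the same proof --- the support function of $\K_q$ on the calibration block is now the $\ell_p$ norm (by $\ell_q$--$\ell_p$ duality) in place of the $\ell_1$ norm --- and bounds the product-split distance of the running average payoff to $\mathcal S_p$ by the per-round regret of OGD run on the losses $-\lb_t$ over $\K_q$, exactly as in \eqref{eq:dist-to-olo} (using that the oracle above yields $\langle\lb_t(\wv_t,y_t),(a,b)\rangle\le\tfrac1m+\tfrac{4L_s}{m^2}$ for every $(a,b)\in\K_q$). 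The OGD bound $DG\sqrt T$ has $G=O(1)$ --- the calibration block of $\lb_t$ has $\ell_2$-norm at most its $\ell_1$-norm, which is $\le1$, and the regret coordinate is $\le1$ after dividing by $L_s$ --- and $D$ the $\ell_2$-diameter of $\K_q$, which is the only place $p$ enters the final rate: for $q>2$, $\|x\|_2\le(m+1)^{1/2-1/q}\|x\|_q$ gives $D=\Theta(m^{1/2-1/q})$ and rate $O(m^{1/2-1/q}/\sqrt T)$, while for $q\le2$ the $\ell_q$-ball sits inside the $\ell_2$-ball so $D=O(1)$ and the rate is $O(1/\sqrt T)$, within the stated bound. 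The main obstacle here is conceptual rather than computational: confirming that the plane-geometry argument of Lemma~\ref{lem:smallest_halfspace} is genuinely insensitive to which norm is used to normalize $a$, and then carrying the $\ell_q$/$\ell_p$ dual pair consistently through the halfspace characterization, the distance identity, and the diameter estimate.
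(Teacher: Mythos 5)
Your proposal is correct and follows essentially the same route as the paper: replace the dual set $\K_\infty^m$ by $\K_q^m$, observe that the halfspace characterization, the distance-to-support-function identity of Lemma~\ref{lem:dualsetopt}, and the halfspace oracle of Lemma~\ref{lem:smallest_halfspace} all carry over (the last because its proof only ever uses $|a_i|\le 1$ and $b\ge 0$), and let the $\ell_2$-diameter of $\K_q^m$ drive the OGD rate. Your explicit justification of why the plane-geometry oracle survives normalization by $\|a\|_q$ is a detail the paper glosses over, and your $O(1)$ diameter bound for $q\le 2$ is in fact tighter than the paper's $\sqrt{m}$ (and than the stated exponent $c=2$, which appears to be a typo for $c=\tfrac12$).
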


We prove this result in the appendix similar to how we show the $\ell_1$ version of the theorem. The convex set $\K_\infty^m$ is replaced with a more general $\K_q^m$ for $q$ such that $\frac{1}{p} + \frac{1}{q} = 1$. The results from Lemma~\ref{lem:dualsetopt} and \ref{eq:dist-to-olo} hold for any $p$. The halfspace oracle algorithm also applies for any $p$. We obtain our bounds using Online Gradient Descent. However, the learning algorithm can be chosen as Online Mirror Descent for a regularizer optimized for the corresponding convex set $\K_q^m$.
}
\section{Convergence Rates}\label{sec:rates}
In this section, we describe how the results from the previous sections can be used to obtain bounds on calibration error and regret. 

\begin{theorem}\label{thm:tradeoff}
For any $x \in [\frac13, \frac25]$, given a black-box prediction oracle, there exists a forecasting algorithm that simultaneously achieves expected regret $O(T^{-x})$ while keeping $\ell_1$-calibration error less than $T^{2x-1}$.
\end{theorem}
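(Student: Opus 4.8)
The plan is to invoke Theorem~\ref{thm:algorithm} with the discretization parameter $m$ tuned as a function of the horizon $T$, and then split its single ``recalibration-rate'' guarantee into a calibration bound and a regret bound. Fix $x \in [\tfrac13,\tfrac25]$ and set $m = \lceil T^{\,1-2x} \rceil$; since $1-2x \ge \tfrac15 > 0$ we have $m \ge \sqrt{4L_s}$ once $T$ exceeds a constant depending on $L_s$ and $x$ (for the remaining finitely many $T$ the targets $T^{-x}$ and $T^{2x-1}$ are $\Omega(1)$ while calibration error and average regret are always $O(1)$, so there is nothing to prove). Theorem~\ref{thm:algorithm} then produces a forecaster $\A$ with $(\ell_1,\tfrac1m,\tfrac{4L_s}{m^2})$-recalibration rate $O(\sqrt{m/T})$. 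Unwinding Definition~\ref{def:rate}, and using Lemma~\ref{lem:rateisapproach} together with the fact that $\A$ predicts only on the grid $\{0,\tfrac1m,\dots,1\}$ (so that $\|\mathbf{c}_T\|_1$ coincides with the $\ell_1$-calibration error when $\epsilon = \tfrac1m$), this single inequality separates into the $\ell_1$-calibration error being at most $\tfrac{1}{2m} + O(\sqrt{m/T})$ and the average regret $R_T(\A)$ being at most $\tfrac{2L_s}{m^2} + O(\sqrt{m/T})$.

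Next I would substitute $m = \lceil T^{\,1-2x}\rceil$ into these two bounds. Then $\tfrac1m = O(T^{2x-1})$, $\tfrac1{m^2} = O(T^{4x-2})$ and $\sqrt{m/T} = O(T^{-x})$, so the calibration error is $O(T^{2x-1}+T^{-x})$ and the average regret is $O(T^{4x-2}+T^{-x})$. The hypothesis $x\in[\tfrac13,\tfrac25]$ is exactly what makes each of these collapse to the advertised bound: $x\ge\tfrac13 \iff -x\le 2x-1$, so the $T^{2x-1}$ term dominates the calibration bound; and $x\le\tfrac25 \iff 4x-2\le -x$, so the $T^{-x}$ term dominates the regret bound. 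The same computation also shows that balancing the grid errors $\tfrac1m,\tfrac1{m^2}$ against the approachability error $\sqrt{m/T}$ with a single choice of $m$ is possible only when $x$ lies in this interval, which is why the interpolation has endpoints $\tfrac13$ and $\tfrac25$.

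The one point that needs a little care is that Definition~\ref{def:rate} (and the $\ell_1$-calibration error from the problem formulation) refers to the \emph{realized} predictions $p_t\sim\wv_t$, whereas the approachability analysis behind Theorem~\ref{thm:algorithm} controls the averaged payoff vector $\tfrac1T\sum_t\lb_t(\wv_t,y_t)$, whose calibration block is built from the mixed strategies $\wv_t$ rather than the sampled $p_t$. The difference between the two calibration vectors is a bounded martingale-difference sum, so a Freedman/Azuma estimate bounds its $\ell_1$-norm by $\tilde{O}(\sqrt{m/T})$ with high probability; for our $m$ this is $\tilde{O}(T^{-x}) = \tilde{O}(T^{2x-1})$, so it does not affect the order of the calibration bound (up to logarithmic factors), and the regret statement is already an expected-regret bound so no such step is needed there. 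A cosmetic point is that $m$ depends on $T$, so the algorithm uses the horizon in advance --- standard, and removable by a doubling trick. I do not expect a genuine obstacle: the theorem is essentially a corollary of Theorem~\ref{thm:algorithm}, the real content being the choice $m=\Theta(T^{1-2x})$ and the observation that $[\tfrac13,\tfrac25]$ is precisely the band of exponents for which a single $m$ tames all the error terms at once.
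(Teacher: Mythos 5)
Your proof is correct and follows essentially the same route as the paper's: invoke Theorem~\ref{thm:algorithm}, split the recalibration rate into calibration $O(1/m + \sqrt{m/T})$ and regret $O(1/m^2 + \sqrt{m/T})$, and set $m = T^{1-2x}$ so that the interval $[\tfrac13,\tfrac25]$ is exactly where each target term dominates. Your added remarks on the realized-versus-expected calibration vector and the doubling trick go beyond what the paper writes but do not change the argument.
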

\begin{proof}
In Theorem \ref{thm:algorithm}, we show that for any $m$, there exists an $(\ell_1,\frac{1}{m}, \frac{4L_s}{m^2})$-online recalibration algorithm which satisfies a recalibration rate of  $O\left(\sqrt{\frac{m}{T}}\right)$. By definition \ref{def:rate}, this implies that the $\ell_1$-calibration error is upper bounded by $O(\frac{1}{m} + \sqrt{\frac{m}{T}})$ and the regret is upper bounded by $O(\frac{1}{m^2} + \sqrt{\frac{m}{T}})$. Setting $m = T^{1-2x}$, we obtain an algorithm that guarantees regret of $O(T^{-x})$ and calibration error $O(T^{2x-1})$
\end{proof}
\pcodelete{
\pcocomment{upgraded the graph to a region of achievable rates instead of just a line}
\begin{figure}[H]
\caption{The graph below captures the linear tradeoff between regret and $\ell_1$ calibration error. According to Theorem~\ref{thm:tradeoff}, the set of jointly achievable rates contains the shaded region.}
  \centering
\begin{tikzpicture}
\begin{axis}[
    width=0.8*\textwidth, 
    axis lines = left,
    xlabel = Regret,
    ylabel = Calibration Error,
    ymin=0,ymax=130,
    xmin=0,xmax=100,
    ticks=none
]
\addplot+[name path=A, black, thick] coordinates { (10,80) (45, 20) };
\addplot [black, dashed] coordinates { (10, -20) (10,110) };
\addplot+[name path=BOT, black, dashed] coordinates { (-10, 20) (120,20) };
\addplot+[name path=TOP, gray] coordinates { (10, 110) (75,110) };
\addplot+[name path=RIGHT, gray] coordinates { (75, 20) (75,110) };

\addplot[gray!20] fill between[of=TOP and A, soft clip={domain=10:50}];
\addplot[gray!20] fill between[of=TOP and BOT, soft clip={domain=45:75}];

\node[label={30:{$(T^{-\frac{2}{5}}, T^{-\frac{1}{5}})$}},,circle,fill,inner sep=2pt] at (axis cs:10,80) {};
\node[label={60:{$(T^{-\frac{1}{3}}, T^{-\frac{1}{3}})$}},circle,fill,inner sep=2pt] at (axis cs:45,20) {};
\node[label={30:{$(T^0, T^{-\frac{1}{3}})$}},circle,fill,inner sep=2pt] at (axis cs:75, 20) {};
\node[label={30:{$(T^0, T^0)$}},circle,fill,inner sep=2pt] at (axis cs:75, 110) {};
\node[label={30:{$(T^{-\frac{2}{5}}, T^0)$}},circle,fill,inner sep=2pt] at (axis cs:10, 110) {};

\end{axis}
\end{tikzpicture}
\end{figure}
}
\pcodelete{
Theorem~\ref{thm:tradeoff} generalizes to 
$\ell_p$ calibration as follows.
\begin{theorem}[$\ell_p$ generalization of Theorem \ref{thm:tradeoff}] \label{thm:p-tradeoff}
Given a black-box prediction oracle, there exist
forecasting algorithms that satisfy the following
guarantees. For any $p \ge 2$, the algorithm
simultaneously achieves regret and $\ell_p$-calibration
error less than $O(T^{1/2})$.
For any $1 \leq p < 2$ and  
$x \in [ \frac{p}{p+2}, \frac{2p}{3p+2} ]$,
there exists a forecasting algorithm that simultaneously achieves regret $O(T^{-x})$ while keeping $\ell_p$-calibration error less than $O(T^{(2x-1)/(2/p \, - \, 1})$.
\end{theorem}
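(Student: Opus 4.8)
The plan is to invoke the $\ell_p$ analogue of Theorem~\ref{thm:algorithm}, namely Theorem~\ref{thm:lp_algorithm}, for a well-chosen cardinality $m$, and then unpack its recalibration rate exactly as in the proof of Theorem~\ref{thm:tradeoff}. Write $q=p/(p-1)$ for the conjugate exponent; then for every $m\ge\sqrt{4L_s}$ that theorem supplies an $(\ell_p,\tfrac1m,\tfrac{4L_s}{m^2})$-online recalibration algorithm whose recalibration rate is $O(m^{c}/\sqrt T)$, with $c=\tfrac12-\tfrac1q$ when $q>2$ (i.e.\ $1\le p<2$) and $c=2$ when $q\le2$ (i.e.\ $p\ge2$). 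By definition the recalibration rate dominates both the truncated $\ell_p$-calibration rate and $R_T(\A)-\tfrac12\cdot\tfrac{4L_s}{m^2}$; and because the forecaster predicts only on the grid $\{0,\tfrac1m,\dots,1\}$ the truncated calibration rate differs from the raw $\ell_p$-calibration error by at most the ball radius $\tfrac1{2m}$. Hence $\ell_p$-calibration error $=O(\tfrac1m+m^{c}/\sqrt T)$ and regret $=O(\tfrac1{m^2}+m^{c}/\sqrt T)$, exactly paralleling the two estimates in the proof of Theorem~\ref{thm:tradeoff}.

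For $1\le p<2$ one has $c=\tfrac1p-\tfrac12=\tfrac{2-p}{2p}$. Take $m=T^{\beta}$ with $\beta=\tfrac{p(1-2x)}{2-p}$ (which specializes to $m=T^{1-2x}$ at $p=1$). A one-line computation gives $c\beta-\tfrac12=-x$, so the rate term $m^{c}/\sqrt T$ equals $T^{-x}$ and the two bounds read regret $=O(T^{-x}+T^{-2\beta})$ and $\ell_p$-calibration error $=O(T^{-x}+T^{-\beta})$. Now $-2\beta\le-x$ exactly when $x\le\tfrac{2p}{3p+2}$, so there the regret bound collapses to $O(T^{-x})$; and $-\beta\ge-x$ exactly when $x\ge\tfrac p{p+2}$, so there the calibration bound collapses to $O(T^{-\beta})$. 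These two conditions are precisely the endpoints of the asserted interval, on which moreover $\beta>0$ (because $\tfrac{2p}{3p+2}<\tfrac12$ for $p<2$), so $m\to\infty$ and $m\ge\sqrt{4L_s}$ for $T$ large, as Theorem~\ref{thm:lp_algorithm} requires. Finally $-\beta=-\tfrac{p(1-2x)}{2-p}=\tfrac{2x-1}{2/p-1}$, the exponent claimed for the calibration error.

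For $p\ge2$ there is no tradeoff to tune: since $\|\cdot\|_p\le\|\cdot\|_1$, running the $\ell_1$ algorithm of Theorem~\ref{thm:tradeoff} already bounds the $\ell_p$-calibration error by its $\ell_1$-calibration error, which gives the $p\ge2$ clause; alternatively one reruns the balancing above with $c=2$ and a slowly growing $m$. Either way the conclusion follows with no further ideas.

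The substantive work lies not in this corollary but in the ingredient it invokes, Theorem~\ref{thm:lp_algorithm}: one must re-establish Lemma~\ref{lem:dualsetopt} and Lemma~\ref{lem:smallest_halfspace} with $\ell_1$ replaced by $\ell_p$ and the dual set $\K$ replaced by its $\ell_q$ counterpart --- in particular the distance formula for $\dist_p$ to the target set $\approachset$, the characterization of the halfspaces containing $\approachset$, and the binary-search halfspace oracle --- and, to obtain the sharper exponent $c=\tfrac12-\tfrac1q$ for $q>2$, replace Online Gradient Descent by Online Mirror Descent tuned to $\K$. Granting Theorem~\ref{thm:lp_algorithm}, the only thing to verify carefully in the present proof is the pair of inequalities delimiting the interval $[\tfrac p{p+2},\tfrac{2p}{3p+2}]$ and the fact that on it each of the two $O(\,\cdot+\cdot\,)$ bounds collapses to the advertised power of $T$.
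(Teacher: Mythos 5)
Your proposal is correct and takes essentially the same route as the paper, whose entire proof is the one-line remark that one substitutes $m=\sqrt{T}$ for $p\ge 2$ and $m=T^{(1-2x)/(2/p-1)}$ for $1\le p<2$ into Theorem~\ref{thm:lp_algorithm}; your $\beta=\tfrac{p(1-2x)}{2-p}$ is exactly that exponent, and you additionally supply the endpoint algebra showing where $[\tfrac{p}{p+2},\tfrac{2p}{3p+2}]$ comes from, which the paper omits. The only (immaterial) divergence is the $p\ge 2$ clause, where you reduce to the $\ell_1$ case via $\|\cdot\|_p\le\|\cdot\|_1$ instead of setting $m=\sqrt T$; either suffices for the stated (and, as written, very weak) $O(T^{1/2})$ bound.
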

The proof of the theorem parallels the proof of
Theorem~\ref{thm:tradeoff}, substituting $m = \sqrt{T}$
in case $p \ge 2$, and $m = T^{(1-2x)/(2/p \, - \, 1)}$
in case $1 \le p < 2$.
}

\bibliographystyle{apalike}
\bibliography{ref}

\section{Appendix}\label{sec:appendix}



\begin{lemma} \label{lem:a=0}
    For every $q_t \in [0,1]$ there exists a 
    $\wv \in \Delta_{m+1}$ such that for 
    all $y \in \{0,1\}$, $r_t(\wv,y) \le \frac{2 L_s}{m^2}$. \footnote{$r_t(\wv,y)$ should be interpreted as $r_t(\wv,q_t, y)$}
\end{lemma}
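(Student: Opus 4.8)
The plan is to reduce the claim to a two-dimensional convex-geometry fact and then close it using properness of $S$ together with the Lipschitz hypothesis.

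\textbf{Step 1 (structure of a proper loss).} Since $S$ is a strictly proper loss, the Bayes risk $G(p) := S(p,p) = \min_q S(q,p)$ is concave in $p$, with $G'(p) = S(p,1) - S(p,0)$, and for all $q,\mu \in [0,1]$ (identifying a distribution on $\{0,1\}$ with its mean) one has the tangent-line identity $S(q,\mu) = G(q) + G'(q)(\mu - q)$. Hence
\[
  S(q,\mu) - S(\mu,\mu) \;=\; D_G(\mu,q) \;:=\; G(q) + G'(q)(\mu-q) - G(\mu) \;\ge\; 0 .
\]
Moreover $p \mapsto S(p,0)$ and $p \mapsto S(p,1)$ are each $L_s$-Lipschitz, so $G'$ is $2L_s$-Lipschitz; consequently $0 \le D_G(\mu,q) \le L_s\,(\mu-q)^2$.

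\textbf{Step 2 (reduction to convex geometry).} Put $u_i := S(i/m,0) - S(q_t,0)$, $v_i := S(i/m,1) - S(q_t,1)$, and $c := \tfrac{2L_s}{m^2}$. Because $r_t(\wv,y)$ is affine in $y$, it suffices to find $\wv \in \Delta_{m+1}$ with $\sum_i \wv(i) u_i \le c$ and $\sum_i \wv(i) v_i \le c$; that is, writing $U := \{(u_i,v_i) : 0 \le i \le m\} \subset \reals^2$, it suffices to show $\mathrm{conv}(U)$ meets the quadrant $Q_c := (-\infty,c] \times (-\infty,c]$. If not, then (both sets being convex, $\mathrm{conv}(U)$ compact) a separating hyperplane yields a nonzero $\mu = (\mu_0,\mu_1)$ --- necessarily with $\mu_0,\mu_1 \ge 0$, since $Q_c$ is unbounded in the $-e_1,-e_2$ directions --- which we normalize to $\mu_0 + \mu_1 = 1$, such that $\min_{0 \le i \le m}\big(\mu_0 u_i + \mu_1 v_i\big) > \sup_{x \in Q_c}\langle \mu, x \rangle = c$.

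\textbf{Step 3 (contradiction).} Note $\mu_0 u_i + \mu_1 v_i = S(i/m,\mu) - S(q_t,\mu)$ with $S(\cdot,\mu) := \mu_1 S(\cdot,1) + \mu_0 S(\cdot,0)$. Let $i^\star$ be the grid index with $i^\star/m$ nearest to $\mu_1$, so $|i^\star/m - \mu_1| \le \tfrac{1}{2m}$. Using properness in the form $S(q_t,\mu) \ge S(\mu,\mu) = G(\mu_1)$ and then Step 1,
\[
  \mu_0 u_{i^\star} + \mu_1 v_{i^\star} \;=\; S(i^\star/m,\mu) - S(q_t,\mu) \;\le\; S(i^\star/m,\mu) - G(\mu_1) \;=\; D_G(\mu_1, i^\star/m) \;\le\; L_s\Big(\tfrac{1}{2m}\Big)^{\!2} = \tfrac{L_s}{4m^2} \;<\; c ,
\]
contradicting Step 2. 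So $\mathrm{conv}(U) \cap Q_c \neq \emptyset$; any $\wv \in \Delta_{m+1}$ whose image $\big(\sum_i \wv(i)u_i,\ \sum_i \wv(i)v_i\big)$ lies in $Q_c$ satisfies $r_t(\wv,0), r_t(\wv,1) \le \tfrac{2L_s}{m^2}$, and by affinity of $r_t(\wv,\cdot)$ the bound holds for each $y \in \{0,1\}$ (indeed for all $y \in [0,1]$).

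\textbf{Where the difficulty lies.} The one essential use of properness is the passage from the naive Lipschitz estimate $|u_{i^\star}|, |v_{i^\star}| = O(L_s/m)$ to the quadratic bound $D_G(\mu_1,i^\star/m) = O(L_s/m^2)$: this holds because $D_G$ is the first-order (Bregman) remainder of the $C^{1,1}$ function $G$ and is therefore second order in the grid spacing --- precisely what lets the regret component of the approachable set shrink like $1/m^2$ rather than only $1/m$. The remaining ingredients (separation/minimax, Taylor estimate) are routine. Degenerate cases are immediate: if $q_t$ itself lies on the grid, take $\wv$ to be the point mass there; the boundary directions $\mu \in \{(1,0),(0,1)\}$ are already handled by Step 3, where $i^\star \in \{0,m\}$ and $D_G = 0$.
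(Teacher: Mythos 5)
Your proof is correct and, at its core, follows the same strategy as the paper's: a duality step (you separate $\mathrm{conv}(U)$ from the quadrant $Q_c$ by a hyperplane with normal $\mu \ge 0$, normalized to a probability vector; the paper extends $y$ to $[0,1]$ by affineness and invokes von Neumann's minimax theorem --- these are the same move), followed by the observation that properness combined with Lipschitzness upgrades the naive $O(L_s/m)$ estimate to the quadratic $O(L_s/m^2)$ bound. The only substantive difference is how that quadratic gain is certified: the paper uses the direct algebraic identity $S(x,\mu) - S(q,\mu) \le [S(x,\mu)-S(\mu,\mu)] + [S(\mu,x)-S(x,x)] = (x-\mu)\,[S(x,0)-S(\mu,0)+S(\mu,1)-S(x,1)] \le 2L_s(x-\mu)^2$, whereas you package it as the Bregman remainder $D_G$ of the Bayes risk $G(p)=S(p,p)$ --- conceptually cleaner, but note that the paper assumes only that $S$ is Lipschitz in its first argument, so $G$ need not be differentiable and the descent-lemma bound $D_G(\mu,q)\le L_s(\mu-q)^2$ is not literally available. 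This caveat is harmless: taking the supergradient selection $g(q)=S(q,1)-S(q,0)\in\partial G(q)$ and applying concavity at both endpoints gives $D_G(\mu,q) \le (\mu-q)\bigl(g(q)-g(\mu)\bigr) \le 2L_s(\mu-q)^2$, which with $|\mu_1 - i^\star/m| \le \tfrac{1}{2m}$ still yields $\tfrac{L_s}{2m^2} < \tfrac{2L_s}{m^2}$, so your conclusion stands with room to spare.
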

\begin{proof}
    Fix $q = q_t$. Recalling the definition of 
    $r_t(\wv,y)$ in Equation~\eqref{eq:regret_game},
    we see that the lemma is equivalent to proving
    \begin{equation} \label{eq:a=0.1}
        \min_{\wv \in \Delta_{m+1}} 
        \max_{y \in \{0,1\}} 
        \sum_{i = 0}^m \wv_t(i) \left(S \left(\frac i m, y \right) - S(q, y)\right)
        \leq \frac{2 L_s}{m^2} .
    \end{equation}
    The functions $S \left( \frac i m, y \right)$ and 
    $S(q,y)$ appearing on the right side of~\eqref{eq:a=0.1}
    are affine functions of $y$, so we can enlarge the domain 
    of $y$ to be the compact, convex set $[0,1]$, rather than
    the two-element set $\{0,1\}$, and then apply von Neumann's
    Minimax Theorem to conclude that inequality~\eqref{eq:a=0.1}
    is equivalent to 
    \begin{equation} \label{eq:a=0.2}
        \max_{y \in [0,1]} 
        \min_{\wv \in \Delta_{m+1}} 
        \sum_{i = 0}^m \wv(i) \left(S \left(\frac i m, y \right) - S(q, y)\right)
        \leq \frac{2 L_s}{m^2} .
    \end{equation}
    The inequality~\eqref{eq:a=0.2} is easy to prove.
    For any $y \in [0,1]$, 
    choose $k \in [m]$ such that $|y - \frac k m| \le \frac1m$,
    and let $x = \frac k m$.
    By the Lipschitz property of $S$ we have
    \[
        | S(x,0) - S(y,0) | \le L_s |x-y| \le \frac{L_s}{m} \quad \mbox{and} \quad
        | S(y,1) - S(x,1) | \le L_s |x-y| \le \frac{L_s}{m},
    \]
    so the triangle inequality implies
    \[
        | S(x,0) - S(y,0) + S(y,1) - S(x,1) | \le \frac{2 L_s}{m},
    \]
    and hence
    \begin{equation} \label{eq:a=0.3}
        (x-y)[S(x,0) - S(y,0) + S(y,1) - S(x,1)] \le \frac{1}{m} \cdot \frac{2 L_s}{m} = \frac{2 L_s}{m^2} .
    \end{equation}
    Now, using the fact that $S$ is a strictly proper scoring rule 
    we have 
    \begin{align*} 
        S(x,y) - S(q,y) & \le
        S(x,y) - S(q,y) + [S(q,y) - S(y,y)] + [S(y,x) - S(x,x)] \\
        & = [S(x,y) - S(y,y)] + [S(y,x) - S(x,x)] \\
        & = (1-y) [S(x,0) - S(y,0)] + y [S(x,1) - S(y,1)] \\
        &    + (1-x) [S(y,0) - S(x,0)] + x [S(y,1) - S(x,1)] \\[1ex]
        & = (x-y) [S(x,0) - S(y,0) + S(y,1) - S(x,1)] \le \frac{2 L_s}{m^2} .
    \end{align*}
    Therefore, if we set $\wv$ to be the probability vector defined by 
    $\wv(k)=1$ and $\wv(j)=0$ for all $j \neq k$, we have
    \[
        \sum_{i = 0}^m \wv(i) \left(S \left(\frac i m, y \right) - S(q, y)\right)
        = \left(S \left(\frac k m, y \right) - S(q, y)\right)
        = S(x,y) - S(q,y) \le \frac{2 L_s}{m^2} .
    \]
    As $y \in [0,1]$ was arbitrary, we have shown that 
    inequality~\eqref{eq:a=0.2} holds, completing the proof
    of the lemma.
\end{proof}

\subsection{Proof of Lemma \ref{lem:smallest_halfspace}}\label{pf:smallest_halfspace}

\approachresult*

\begin{proof}
To show that $H_1$ is approachable, we will find a mixed distribution for the forecaster (i.e, a probability distribution over $p \in \{0/m, 1/m, 2/m, \ldots, (m-1)/m, 1\}$)	such that $\E_p \left[ \langle a, \lb_c(p, y) \rangle + b \lb_r(p, y) \right] \leq \frac{1}{m} + \frac{4 b L_S}{m^2}$ for any $y \in \{0,1\}$. 
For simplicity, define  
\begin{align*}
    c(i,y) = \frac{i}{m} \, - \, y 
    & \quad \text{and} \quad 
    C_i = \begin{bmatrix} c(i,0) \\ c(i,1) \end{bmatrix} \\
    d(i,y) = b \cdot S \left( \frac{i}{m}, y \right) - b \cdot S(q_t, y) 
    & \quad \text{and} \quad 
    D_i = \begin{bmatrix} d(i,0) \\ d(i,1) \end{bmatrix} \\
    f(i,y) = a_i c(i,y) + d(i,y) 
    & \quad \text{and} \quad 
    F_i = \begin{bmatrix} f(i,0) \\ f(i,1) \end{bmatrix}
    = a_i C_i + D_i
\end{align*}
Observe that $f(i,y) = \langle a, \lb_c(\frac{i}{m},y) \rangle
+ b \lb_r(\frac{i}{m},y)$, 
so our objective becomes to show that there exists a distribution $p$ over $\frac{i}{m} \in \{0, \ldots, m \}$ such that $\En_p f(i, y) \leq \frac{1}{m} + \frac{4 b L_S}{m^2}$ for $y \in \{0,1\}$, or equivalently that the vector $\En_p F_i$ belongs to the quadrant-shaped set $(-\infty,\frac1m + \frac{4 b L_S}{m^2}] \times (-\infty,\frac1m + \frac{4 b L_S}{m^2}]$.
We will be choosing $p$ to be either a point-mass on 
$\frac{i}{m}$ for some $i$, or a distribution on two
consecutive values in the set $\{0,\frac1m, \frac2m, \ldots, 1\}.$
Hence, the vector $\En_p F_i$ will belong to one of $m$ closed line segments forming a polygonal path through the vectors $F_0,F_1,\ldots,F_m$. Observe that $F_0$ belongs to the
left half-plane, i.e.~$f(0,0) \le 0$, because
\begin{align*}
    f(0,0) & = a_0 c(0,0) + d(0,0) =
    b (S(0,0) -  S(q_t, 0)) \le 0, 
\end{align*}
where the last inequality holds because $b \ge 0$
and $S$ is a proper scoring rule. Similarly, $F_m$
belongs to the lower half-plane, i.e.~$f(m,1) \le 0$,
because
\begin{align*}
    f(m,1) & = a_m c(m,1) + d(m,1) = 
    b (S(1,1) - S(q_t,1)) \le 0 .
\end{align*}
If $F_0$ or, respectively, $F_m$ belongs to the third
quadrant --- that is, the set $(-\infty,0] \times (-\infty,0]$ ---
then we choose $p$ to be a point-mass on 0 or 1, respectively.
The remaining case is that $F_0$ and $F_m$ belong to the sets $(-\infty,0] \times (0,\infty)$ and $(0,\infty) \times (-\infty,0],$
respectively. In that case, $F_0$ and $F_m$ lie on opposite sides
of the line $L$ consisting of all points 
$\begin{bmatrix} x_0 \\ x_1 \end{bmatrix}$
that satisfy $x_0 = x_1$; $F_0$ lies above $L$ while $F_m$
lies below it. Hence, there must be at least one
index $j$ such that $F_j$ lies on or above $L$ while 
$F_{j+1}$ lies below it. We aim to construct a distribution $p$ 
supported on $\{ \frac{j}{m}, \, \frac{j+1}{m} \}$
such that $\En_p F_i$ belongs to the set $(-\infty,\frac1m + \frac{4 b L_S}{m^2}] \times (-\infty,\frac1m + \frac{4 b L_S}{m^2}]$.
Assume without loss of generality that $j \ge m/2$.
(The case $j \le m/2$ is handled symmetrically, by exchanging
the roles of the labels $y=0$ and $y=1$, i.e.~the first and 
second coordinates of the vectors we are considering.)

\begin{center}
\includegraphics[scale=0.4]{sections/illustration.png}
\end{center}
For $p \in [0,1]$ let $D(p)$ denote the vector
\[
    D(p) = b \cdot \begin{bmatrix}
        S(p,0) - S(q_t,0) \\ S(p,1) - S(q_t,1)
    \end{bmatrix}
\]
and observe that the notation $D_i$ defined earlier
is equivalent to $D(i/m).$
The fact that $S$ is a proper scoring rule ensures that 
when $y$ is a random sample from $\{0,1\}$ taking the 
value 1 with some probability $p$,
the value of $p'$ that minimizes $\En_y [S(p',y) - S(q_t,y)]$ 
is $p' = p$. Since the expected value 
$\En_y [S(p',y) - S(q_t,y)]$ is 
calculated by taking the inner product of 
the vector $D(p')$ with the probability vector
\[ 
  Y(p) = \begin{bmatrix} 1-p \\ p \end{bmatrix},
\]
this means that the curve 
$\mathcal{D} = \{ D(p') \mid 0 \le p' \le 1 \}$
is convex and that the line
$$L(p) = \{ x \mid \langle Y(p),x \rangle 
= \langle Y(p), D(p) \rangle \}$$ is 
tangent to $\mathcal{D}$
at the point $D(p)$. The normal
vector to this tangent line is 
$Y(p)$, so the vector 
$C(p) = \begin{bmatrix} p \\ p-1 \end{bmatrix}$,
being orthogonal to $Y(p)$, is parallel to the 
tangent line at $D(p).$ When $p=i/m$, observe 
that the vector $C(p)$ defined here coincides 
with $C_i$ defined earlier. 

Summarizing the foregoing discussion, 
the line 
$L(j/m) = \{ D_j + \lambda C_j \mid \lambda \in \mathbb{R} \}$
is tangent to the convex curve $\mathcal{D}$ at the point
$D_j$, hence it lies (weakly) below that curve. In particular,
recall the line $L$ 
consisting of points whose first and second coordinates
are equal, and consider the point $I_j$ where $L$ intersects
$L(j/m)$. Since $L(j/m)$ lies (weakly) below $\mathcal{D}$
and $\mathcal{D}$ intersects $L$ at $D(q_t) = \begin{bmatrix}
0 \\ 0 \end{bmatrix}$, the intersection of $L(j/m)$ with $L$ 
must belong to the third quadrant. 
From these properties, it will follow that the
line segment joining $F_j$ to $F_{j+1}$ intersects
the set
$(-\infty,\frac1m + \frac{4 b L_S}{m^2}] \times (-\infty,\frac1m + \frac{4 b L_S}{m^2}]$ as required.

\begin{center}
\includegraphics[scale=0.6]{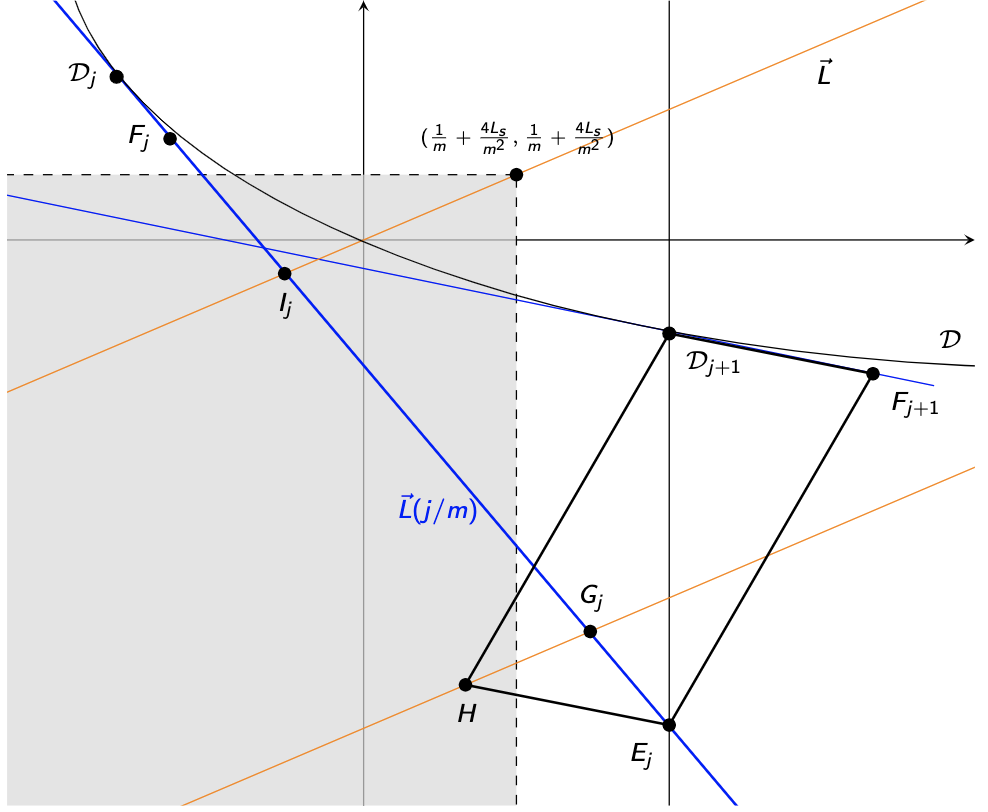}
\end{center}

Let $E_{j}$ be the intersection point of $L(j/m)$ with
a vertical line through $D_{j+1}.$ Since $L(j/m)$ lies
below $\mathcal{D}$, we know that $E_{j}$ is situated 
directly below $D_{j+1}.$ To reason about the distance between
$D_{j+1}$ and $E_j$, observe that the convexity of the 
curve $\mathcal{D}$ implies that the slope of the line segment
joining $D_j$ to $D_{j+1}$ lies between the slopes of the 
tangent lines at $D_j$ and $D_{j+1}$. Those slopes are 
$1 - m/j$ and $1 - m/(j+1),$ respectively. Hence, a pair
of lines passing through $D_j$, with slopes 
$1 - m/j$ and $1 - m/(j+1)$, will intersect 
the vertical line through $D_{j+1}$ in a line
segment that contains $D_{j+1}.$ The lower
endpoint of that line segment is $E_j$. Its
length is the
difference between the slopes of the two lines, 
times the horizontal displacement
between $D_j$ and $D_{j+1}$. In other
words, the length of the vertical line segment is
\[
    \frac{m}{j(j+1)} \cdot b \cdot \left[
    S \left( \frac{j}{m}, 0 \right) - 
    S \left( \frac{j+1}{m}, 0 \right) \right] 
    \; \le \; 
    \frac{m}{(m/2)^2} \cdot 
    \frac{b L_S}{m} =
    \frac{4 b L_S}{m^2} .
\]
Since the vertical line segment contains 
$E_j$ and $D_{j+1}$, its length is an upper bound
on their distance from one another.

Now define 
\[ G_j = E_j + a_{j+1} C_j = F_{j+1} + (E_j - D_{j+1}) + a_{j+1} 
    (C_j - C_{j+1}) .
\] 
Since $E_j$ lies on $L(j/m)$ and $C_j$ is parallel to $L(j/m)$,
we know that $G_j$ lies on $L(j/m).$ To determine the position
of $G_j$ relative to $L$, observe that $C_j - C_{j+1} = \begin{bmatrix} -1/m \\ -1/m \end{bmatrix}$
is parallel to $L$, $F_{j+1} + (E_j - D_{j+1})$ lies below $F_{j+1}$, and recall that $F_{j+1}$ lies below $L$.
Hence, $G_j$ 
lies below $L$. As $F_j$ lies on or above $L$ it follows
that the line segment joining $F_j$ to $G_j$ intersects
$L$, and this intersection point must be $I_j$ because
the segment connecting $F_j$ to $G_j$ is contained 
in $L(j/m)$.  Write $I_j = (1-t) F_j + t G_j$ for
some parameter $t \in [0,1].$

If $p$ is the distribution that selects a random
$\frac{i}{m} \in \{\frac{j}{m}, \frac{j+1}{m} \}$
by setting $\frac{i}{m} = \frac{j}{m}$ with probability 
$1-t$ and $\frac{i}{m} = \frac{j+1}{m}$ with probability
$t$, then 
\begin{align} 
\nonumber
    \En_p F_i = (1-t) F_j + t F_{j+1} & =
        (1-t) F_j + t G_j + t (F_{j+1} - G_j) \\
        \nonumber
        & = 
        I_j + t (D_{j+1} - E_j) + t a_{j+1} (C_{j+1} - C_j)\\
        \label{eq:enpfi}
        & =
        I_j + t (D_{j+1} - E_j) + \tfrac{t a_{j+1}}{m} 
        \begin{bmatrix} 1 \\ 1 \end{bmatrix} .
\end{align}
We need to show that both coordinates of the vector 
in Equation~\eqref{eq:enpfi} are less than or equal
to $\frac1m + \frac{4 b L_S}{m}.$ The first coordinate
of $I_j$ is non-positive, the first coordinate of $D_{j+1}-E_j$
is zero, and the first coordinate of $\tfrac{t a_{j+1}}{m}
\begin{bmatrix} 1 \\ 1 \end{bmatrix}$ is at most $\frac1m$
since $0 \le t \le 1$ and $|a_{j+1}| \le 1$. 
The second coordinate of $I_j$ is non-positive,
the second coordinate of $D_{j+1} - E_j$ is 
at most $\frac{4 b L_S}{m^2}$, and the second
coordinate of $\tfrac{t a_{j+1}}{m}
\begin{bmatrix} 1 \\ 1 \end{bmatrix}$ is at most $\frac1m$.
\end{proof}

\subsection{Constructing the Halfspace Oracle}\label{sec:oracle}
Here we go into more detail about how to construct the oracle asserted in \rdkedit{Section}~\ref{sec:algorithm}. Recall that in the proof of Lemma~\ref{pf:smallest_halfspace}, given a halfspace $\theta$ parameterized by $(a, b)$, we defined the vector $F_i$ as follows:
\begin{equation}
    F_i = \begin{bmatrix} f(i, 0) \\ f(i, 1) \end{bmatrix} \quad \text{where} \quad f (i, y) = a_i \left(\frac{i}{m} - y \right) + b \left[ S \left(\frac{i}{m}, y \right) - S(q_t, y) \right]
\end{equation}
In the proof, we note that $F_0$ is either in the 2nd or 3rd quadrant. Similarly, $F_m$ is either in the 3rd or 4th quadrant. Thus, we first check if $F_0$ or $F_m$ is in the 3rd quadrant. If one of them is, then we output a point distribution at the corresponding probability value.
If none of $F_0$ or $F_m$ is in the 3rd quadrant, then we binary search for an index $i$ with $F_i$ in the 3rd quadrant or a pair of consecutive indices $j, j+1$ where $F_j$ is in 2nd quadrant and $F_{j+1}$ is in the 4th quadrant. In the first case, $w_t(i) = 1$ and 0 everywhere else. In the second case, we set 
\begin{align}
    w_t(j) = \frac{f(j+1,1) - f(j+1, 0)}{f(j, 0) - f(j+1, 0) - f(j,1) + f(j+1,1)} \\
    w_t(j+1) = \frac{f(j,0) - f(j, 1)}{f(j, 0) - f(j+1, 0) - f(j,1) + f(j+1,1)}
\end{align}
and 0 everywhere else. 
The correctness of this procedure follows from the proof of Lemma~\ref{pf:smallest_halfspace}. The formula is obtained by solving this system of equations below to obtain a convex combination of $F_j$ and $F_{j+1}$:
\[
\begin{bmatrix}
  1 & 1
\end{bmatrix}
\begin{bmatrix} w_t(j) \\ w_t(j+1) \end{bmatrix} = 1
\quad \text{and} \quad
\begin{bmatrix}
  1 & -1
\end{bmatrix}
\begin{bmatrix}
  f(j, 0) & f(j+1, 0) \\
  f(j, 1) & f(j+1, 1) \\
\end{bmatrix}
\begin{bmatrix} w_t(j) \\ w_t(j+1) \end{bmatrix} = 0 
\]
Note that $F_i$ does not need to be pre-computed for every index. It can be computed online during the binary search steps. Thus, this halfspace oracle can be implemented in $O(\log m)$ steps. 

\pcodelete{
\subsection{Learning Algorithm for general $\ell_p$}\label{sec:lp_algorithm}
Here we describe the learning for general $\ell_p$. Similar to that of the learning algorithm for $p=2$, we use Online Gradient Descent algorithm from \citep{zinkevich_online_2003}.
\pcoedit{
As \citet{pmlr-v19-abernethy11b} points out in the calibration version, the vectors $\lb_t (\wv_t, y_t)$ are sparse and have at most two nonzero coordinates. This is also true for our recalibration payoff game since $\wv_t$ is a distribution over two consecutive probabilities values. Hence, the Gradient Descent Step requires only $O(1)$ computation and the Projection Step can be performed $O(1)$ (for $p = 1$). Since $\thb$ is the only state the OGD algorithm needs to store, the storage space required is $O(\min\{T, m\})$.}
The $\ell_2$-diameter for the convex set $\K_q^m$. The $\ell_2$ diameter of $\K_q^m$ for $q\leq 2$ (i.e $p \geq 2$) is upper bounded by $\sqrt{m}$. For $q > 2$ (i.e $p < 2$), the $\ell_2$ diameter $m^c$ where $c = \frac{1}{2} - \frac{1}{q}$. Since the vectors $\lb_t (\wv_t, y_t)$ are sparse and have at most two nonzero coordinates, a similar argument to the case where $p = 1$, allows us to show that the Gradient Descent Step and the Projection Step can be performed $O(1)$. 
\begin{align}
  C_{p,T}^{\epsilon, \delta}(\A) &\leq \dist_p^1 \left( \frac{1}{T} \sum_{t=1}^T \lb_t(\wv_t, y_t), \mathcal{S}_p^m \right) \\
  &\leq \frac{\text{Regret}_t}{T} \\
  &\leq \frac{GD}{\sqrt{T}} = O\left( \frac{m^c}{\sqrt{T}}\right) 
\end{align}
where $c = \frac{1}{2} - \frac{1}{q}$ for $q > 2$ (i.e $p < 2$) and $c = \frac{1}{2}$ for $q\leq 2$ (i.e $p \geq 2$)
}

\subsection{Proof of Lemma~\ref{lem:dualsetopt}}\label{sec:dualsetopt}
\dualsetopt*
\begin{proof}
\begin{align}
    \dist_1 \left( \xb, \approachset \right) &= \dist_1 \left( \xb_{1:m+1}, B_1^{m+1}(1/m) \right) + \dist_1 \left( \xb_{m+2}, \left( -\infty, \frac{4L_s}{m^2} \right] \right) \\
    &= -\frac{1}{m} - \min_{\theta: \norm{\theta}_{\infty} \leq 1} \langle - \xb_{1:m+1}, \theta \rangle  - \frac{4L_s}{m^2} - \min_{\theta \in [0,1]} \langle - \xb_{m+2}, \theta \rangle \\
    &= -\frac{1}{m} - \frac{4L_s}{m^2} - \min_{\theta \in \K} \langle - \xb, \theta \rangle \\
\end{align}
Remark: We need $\| \xb_{1:m+1}\|_{1} \geq 1 / m$, and $| \xb_{m+2} | \geq \frac{L_s}{m^2}$ mainly for technicality in order to ensure equality. If these didn't hold, just like in the proof of Approachability, if you're already in the set you wish to approach, you can just make an arbitrary move. Similarly, if $\| \xb_{1:m+1}\|_{1} < 1 / m$ (i.e calibration error is already less than $\frac{1}{m}$), the algorithm can just follow the oracle's predictions. On the other hand, if $\xb_{m+2} < \frac{L_s}{m^2}$, then following the halfspace oracle still ensures expected calibration error of at most $\frac{1}{m}$ for the timestep.
\end{proof}


\section{Reducing Recalibration to Online Multiobjective Optimization}
\label{sec:minimax-appendix}

In this section we present a reduction from recalibration to the online multiobjective optimization problem studied by \citet{multicalibeating}. We begin by reviewing their assumptions and terminology and restating their main result. 

\subsection{Review of Online Multiobjective Optimization}
\label{sec:minimax-appendix-review}
In the setting considered by \citet{multicalibeating}, a learner and an adversary play a $T$-round game where the timing and information structure of each round $t$ are as follows.
\begin{enumerate}
\item The adversary presents to the learner an {\em environment} $(X^t, Y^t, \ell^t)$ where each of $X^t, Y^t$ is a compact convex subset of a finite-dimensional Euclidean space, and $\ell^t$ is a continuous vector-valued loss function taking values in $[-C,C]^d$, such that each coordinate function $\ell^t_j(x,y) \, (j = 1,2,\ldots,d)$ is convex in its first argument and concave in its second argument. 
\item The learner chooses $x^t \in X^t$ and reveals it to the adversary.
\item The adversary chooses $y^t \in Y^t$ and reveals it to the learner.
\end{enumerate}
The game ends after $T$ rounds, and the cumulative loss vector is $\sum_{t=1}^T \ell^t(x^t,y^t)$. The learner's objective is to minimize the maximum coordinate of this vector.

The algorithm analyzed by \citet{multicalibeating} is easy to describe. For a specified learning rate $\eta > 0$, the algorithm computes in each round a weight vector $\chi^t$ whose $j^{\mathrm{th}}$ coordinate is proportional to $\exp(\eta \sum_{s=1}^{t-1} \ell_j^s(x^s,y^s)).$ Then it chooses $x^t$ by solving the minimax problem
\[
    x^t \in \arg \min_{x \in X^t} \max_{y \in Y^t}
    \langle \chi^t, \, \ell^t(x,y) \rangle .
\]
The analysis of the algorithm relates the learner's loss to a quantity called the {\em AMF value}, defined as follows. For the environment $(X^t, Y^t, \ell^t)$ selected by the adversary at time $t$, define $w_A^t$ by 
\[
    w_A^t = \sup_{y \in Y^t} \min_{x \in X^t} \left\{
    \max_{j \in [d]} \ell_j^t(x,y) \right\} .
\]
This quantity $w_A^t$ is called 
the {\em AMF value} of the stage-$t$ environment
$(X^t,Y^t,\ell^t)$ because it is the value of the game in which the adversary moves first, announcing $y \in Y^t$, the learner responds by selecting $x \in X^t$, and the learner seeks to minimize the loss function $\max_{j \in [d]} \ell_j^t(x,y).$
(The abbreviation ``AMF'' stands for ``adversary moves first''.)
\begin{theorem}[\citet{multicalibeating}]
\label{thm:lee}
    Suppose $T \geq \ln(d)$. 
    If the learner uses the multiplicative-weights 
    algorithm described above, with learning rate 
    $\eta = \sqrt{\frac{\ln d}{4 T C^2}}$, then its
    cumulative loss vector will satisfy
    \begin{equation} \label{eq:amf-regret}
        \max_{j \in [d]}
        \sum_{t=1}^T \ell_j^t(x^t,y^t) \le
        \sum_{t=1}^T w_A^t \; + \; 4 C 
        \sqrt{T \ln d} .
    \end{equation}
\end{theorem}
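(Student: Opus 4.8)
The plan is to run the textbook exponential-weights potential-function argument, the one novelty being that the per-round guarantee of the learner's update is certified via Sion's minimax theorem rather than von Neumann's, since the coordinate losses $\ell^t_j$ are only convex--concave rather than bilinear. Write $L_j^t = \sum_{s=1}^{t}\ell_j^s(x^s,y^s)$ for the running loss in coordinate $j$ and define the potential $\Phi_t = \sum_{j=1}^d \exp(\eta L_j^t)$, so that $\Phi_0 = d$ and $\exp(\eta \max_{j} L_j^T) \le \Phi_T$. The first observation is that the weight vector $\chi^t$ used by the algorithm is precisely the normalization $\chi_j^t = \exp(\eta L_j^{t-1})/\Phi_{t-1}$, so $\chi^t$ lies in the probability simplex and $\Phi_t = \Phi_{t-1}\sum_{j} \chi_j^t \exp(\eta \ell_j^t(x^t,y^t))$.

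Next I would bound the per-round growth of the potential. Because $T \ge \ln d$ and $\eta = \sqrt{\ln d/(4TC^2)}$, we get $\eta C \le \tfrac12 \le 1$, so each exponent $z = \eta\,\ell_j^t(x^t,y^t)$ has $|z| \le 1$ and satisfies $e^z \le 1 + z + z^2$. Using $|\ell_j^t| \le C$ and then $1 + a \le e^a$, this yields
\[
\Phi_t \;\le\; \Phi_{t-1}\Big(1 + \eta\,\langle \chi^t, \ell^t(x^t,y^t)\rangle + \eta^2 C^2 \Big) \;\le\; \Phi_{t-1}\exp\!\big(\eta\,\langle \chi^t, \ell^t(x^t,y^t)\rangle + \eta^2 C^2\big).
\]

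The heart of the proof is showing $\langle \chi^t, \ell^t(x^t,y^t)\rangle \le w_A^t$. Since $x^t \in \arg\min_{x \in X^t}\max_{y \in Y^t}\langle \chi^t, \ell^t(x,y)\rangle$, plugging $y = y^t$ into the inner maximum gives $\langle \chi^t, \ell^t(x^t,y^t)\rangle \le \min_{x \in X^t}\max_{y \in Y^t}\langle \chi^t, \ell^t(x,y)\rangle$. The scalarized payoff $(x,y) \mapsto \langle \chi^t, \ell^t(x,y)\rangle$ is continuous, convex in $x$, and concave in $y$ (a nonnegative combination of the $\ell_j^t$, each of which has these properties), and $X^t, Y^t$ are compact and convex, so Sion's minimax theorem permits exchanging the min and max:
\[
\min_{x \in X^t}\max_{y \in Y^t}\langle \chi^t, \ell^t(x,y)\rangle \;=\; \max_{y \in Y^t}\min_{x \in X^t}\langle \chi^t, \ell^t(x,y)\rangle .
\]
Finally, since $\chi^t$ is a probability vector we have $\langle \chi^t, \ell^t(x,y)\rangle \le \max_{j \in [d]}\ell_j^t(x,y)$ for all $x,y$; carrying this inequality through the inner $\min_x$ and the outer $\sup_y$ turns the right-hand side into $\sup_{y \in Y^t}\min_{x \in X^t}\max_{j}\ell_j^t(x,y) = w_A^t$, as desired.

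To finish, combine the two preceding displays with the per-round bound and telescope over $t = 1,\dots,T$ to get $\Phi_T \le d\exp\big(\eta \sum_{t} w_A^t + \eta^2 C^2 T\big)$; comparing with $\Phi_T \ge \exp(\eta \max_j L_j^T)$ and taking logarithms gives
\[
\max_{j \in [d]} \sum_{t=1}^{T} \ell_j^t(x^t,y^t) \;\le\; \frac{\ln d}{\eta} + \sum_{t=1}^{T} w_A^t + \eta C^2 T .
\]
Substituting $\eta = \sqrt{\ln d/(4TC^2)}$ makes the two error terms $2C\sqrt{T\ln d}$ and $\tfrac12 C\sqrt{T\ln d}$, whose sum is at most $4C\sqrt{T\ln d}$, establishing \eqref{eq:amf-regret}. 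I expect the minimax interchange to be the only non-mechanical step: one must check the convex--concave structure so Sion's theorem applies, and it is exactly this step that upgrades the ``learner-moves-first'' value the algorithm attains each round into the ``adversary-moves-first'' value $w_A^t$ appearing in the bound; the remainder is the standard potential computation together with an optimization over $\eta$.
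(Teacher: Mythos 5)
The paper does not prove Theorem~\ref{thm:lee}; it is imported verbatim from \citet{multicalibeating}, so there is no in-paper argument to compare against. Your proof is correct and self-contained: the exponential-weights potential bound, the use of $T \ge \ln d$ to guarantee $\eta C \le \tfrac12$ so that $e^z \le 1+z+z^2$ applies, the Sion minimax interchange converting the learner-moves-first value into the AMF value $w_A^t$, and the final constants ($2C\sqrt{T\ln d} + \tfrac12 C \sqrt{T \ln d} \le 4C\sqrt{T\ln d}$) all check out, and this is essentially the argument given in the cited reference.
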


\subsection{Reducing Recalibration to Online Multiobjective Optimization}
\label{sec:minimax-appendix-reducing}

Suppose we are given $\varepsilon = 1/m$ for some 
natural number $m$, and we wish to design a recalibration
algorithm that predicts probabilities $p_t$ in the set 
$\{0, \, 1/m, \, 2/m, \, \ldots, 1 \}.$ Recall the 
vector-payoff game from Section~\ref{sec:approachability} 
that was used for recalibration. Adjusting notation 
to match the notation from \citet{multicalibeating},
the forecasting algorithm uses a distribution 
$x^t$ drawn from $X^t = \Delta_{m+1}$, the set of probability 
distributions on the $(m+1)$-element set 
$\{0, 1/m, \ldots, 1\}$. (In Section~\ref{sec:approachability}
this distribution was called $\mathbf{w}_t$.) The 
adversary selects $y^t$ from $Y^t = [0,1]$. (Formerly 
this was called $y_t$ and constrained to belong to 
$\{0,1\}$.) The vector payoff 
$\ell^t(x,y)$ is defined to be
$\ell^t(x,y) = \cbb (x,y) \oplus r_t(x,y)$,
where 
\begin{align}
    \cbb_i(x,y) &= x_i \left( y - \frac{i}{m} \right) 
    \label{eq:cbbj}
    \\
    r_t(x,y) &= \sum_{i=0}^m x_i \left(S \left(\frac i m, y \right) - S(q_t, y)\right) .
\end{align}
After $T$ rounds of interaction, if we write the 
average loss vector $\bar{\ell} = \frac1T \sum_{t=1}^T \ell^t(x^t,y^t)$ 
as 
\[
\bar{\ell} = 
\left( \frac1T \sum_{t=1}^T \cbb(x^t,y^t) \right)
\oplus \left( \frac1T \sum_{t=1}^T r_t(x^t,y^t) \right)
= \bar{\cbb} \oplus \bar{r}
\] 
then $\ell_1$ calibration error is $\| \bar{\cbb} \|_1$
while the average regret is $\bar{r}$.

The objective in the recalibration problem is to 
ensure that $\| \bar{\cbb} \|_1$ and $\bar{r}$ are
both small. This doesn't quite correspond to the
learner's goal in online multiobjective optimization,
which is to make every {\em coordinate} of the 
average loss vector small. The difference is that
in recalibration we are concerned with 
$\| \bar{\cbb} \|_1$ rather than $\| \bar{\cbb} \|_{\infty}$.
However, the difference can be overcome by embedding 
the loss vectors in a higher dimension. Specifically,
let $d = 2^{m+1} + 1$ and let $M$ be the matrix with
$d$ rows and $m+2$ columns such that the first 
$d-1$ rows of $M$ constitute the set of row
vectors $\{\pm 1\}^{m+1} \oplus (0)$ while the
last row of $M$ is the row vector 
$(0)^{m+1} \oplus (1) = (0,0,\ldots, 0,1).$
For any vector $w = c \oplus r \in \RR^{m+1} \oplus \RR$
we have
\begin{equation} \label{eq:norm-mw}
    \max_{j \in [d]} (Mw)_j =
    \max \{ \| c \|_1, \, r \} .
\end{equation}
Hence, in the online multiobjective optimization 
problem with $d$-dimensional 
vector losses $\tilde{\ell}^t = M \ell^t$,
the maximum coordinate of the (normalized)
cumulative loss vector $\frac1T \sum_{t=1}^T 
\tilde{\ell}^t(x^t,y^t)$ equals 
the maximum of the forecaster's 
$\ell_1$ calibration error and 
average regret.

To apply Theorem~\ref{thm:lee} to the
sequence of environments $(X^t, Y^t, \tilde{\ell}^t)$
we first need upper bounds on the infinity-norms 
of the loss vectors $\tilde{\ell}^t(x,y)$ and on
the AMF values,
$w_A^t$, of these environments. Such upper
bounds are very easy to obtain. 
We have 
\[
    \| \tilde{\ell}^t(x,y) \|_{\infty} = 
    \max \{ \|\cbb(x,y)\|_1, \, |r_t(x,t)| \} 
    \le \max \{ 1, L_s \},
\]
where the inequality follows from
the definitions of $\cbb(x,y)$ and $r_t(x,y)$,
recalling that the Lipschitz constant of the 
scoring rule $S$ is $L_s$. As for bounding the
AMF values, for each $y \in [0,1]$,
if we let $\frac{i}{m}$ be the element of 
$\{0, \, 1/m, \, 2/m, \, \ldots, \, 1\}$
closest to $y$, then $|y - \frac{i}{m}| \le \frac{1}{2m}.$
Define $x \in \Delta_{m+1}$ to be a point-mass
distribution on $i$. Then, 
\[
  \| \cbb(x,y) \|_1 = \left| y - \frac{i}{m} \right| 
  \le \frac{1}{2m},
\]
Meanwhile, 
\[
  r_t(x,y) = S \left( \frac i m , y \right) -
  S \left( q_t, y \right) 
  \le 
  S \left( \frac i m, y \right) - S \left( y, y \right)  
  \le
  L_s \left| \frac i m - y \right| 
  \le
  \frac{L_s}{2m} .
\]
Hence,
\[
    w_A^t = \sup_{y \in [0,1]} \min_{x \in \Delta_{m+1}} 
    \left\{ \max_{j \in [d]} \tilde{\ell}^t_j(x,y) 
    \right\} =
    \sup_{y \in [0,1]} \min_{x \in \Delta_{m+1}} 
    \left\{ \max \left( 
        \| \cbb(x,y) \|_1 , \, r_t(x,t) 
    \right) \right\}
    \le
    \frac{\max(1,L_s)}{2m}.
\]

Using the upper bounds
$C \le \max (1,L_s)$ and $w_A^t \le 
\frac{\max(1,L_s)}{2m}$ in
Theorem~\ref{thm:lee}, we find that if the
algorithm of \citet{multicalibeating} with
learning rate $\eta = \sqrt{\frac{\ln d}{4C T^2}}$
is applied to the sequence of environments 
$(X^t, Y^t, \tilde{\ell}^t)$ it will satisfy
the bound
\[
    \max_{j \in [d]} 
    \left( \frac1T \sum_{t=1}^T \tilde{\ell^t}_j(x^t,y^t)
    \right) 
    \le 
    \frac1T \sum_{t=1}^T w_A^t \; + \; 
    \frac1T \cdot 4C \sqrt{T \ln d} 
    \le 
    \max(1,L_s) \cdot 
    \left( \frac{1}{2m} \, + \, 
    4 \sqrt{ \frac{\ln d}{T} } \right) .
\]
Earlier we derived that the 
left side is the maximum of the 
forecaster's $\ell_1$ calibration error
and average regret. Recalling that 
$1/m = \varepsilon$ and that 
$d = 2^{m+1} + 1,$ we find that 
both the $\ell_1$ calibration error
and the average regret are bounded 
above by $\max(1,L_s) \cdot O(\varepsilon + 1 / 
\sqrt{\varepsilon T}))$. 

Compared to this bound, our Algorithm~\ref{alg:recalib} 
achieves the same upper bound on $\ell_1$ calibration
error but an improved bound of 
$O(L_s \varepsilon^2 + 1/\sqrt{\varepsilon T})$
on average regret. It is tempting to try 
to modify the reduction from recalibration to 
online multiobjective optimization, to see 
if it can achieve the same bound. For example, 
above when we derived the inequality 
$r_t(x,y) \le \frac{L_s}{2m},$ a more 
refined analysis using the property that 
the scoring rule $S$ is strictly proper 
would yield the bound $r_t(x,y) \le O \left( 
\frac{L_s}{m^2} \right).$
This means one could modify the definition of the loss
vectors $\tilde{\ell}^t(x,y)$ by rescaling their
final coordinate to equal $m \cdot r_t(x,y)$
rather than $r_t(x,y),$ without invalidating
the upper bound on the AMF values $w_A^t.$
Then an upper bound of the 
form $\max_j \left( \frac1T \sum_t \tilde{\ell}_j^t(x^t,y^t)
\right) \le O(\varepsilon)$  
would simultaneously imply 
$\ell_1$ calibration error $O(\varepsilon)$
and average regret $O(\varepsilon^2),$ because
one gains a factor of $1/m = \varepsilon$ when
rescaling the final coordinate of 
$\frac1T \sum_t \tilde{\ell}_j^t(x^t,y^t)$
to convert it back into average regret.
However, defining the final coordinate of 
$\tilde{\ell}^t(x,y)$ to equal 
$m \cdot r_t(x,y)$ would mean
that the infinity-norm of the loss vectors
is bounded above by $C = m \cdot \max(1,L_s)$,
it is no longer bounded above merely by 
$\max(1,L_s).$ Hence, the rescaling inflates 
the regret term $4 C \sqrt{T \ln d}$ in 
Theorem~\ref{thm:lee} by a factor of $m = 1/\varepsilon$,
more than offsetting any potential gains 
resulting from the rescaling.

\subsection{Efficient Implementation of the Reduction}
\label{sec:minimax-appendix-implementation}

Because the reduction described in Section~\ref{sec:minimax-appendix-reducing} involves loss vectors in dimension $d = 2^{m+1}+1$, a straightforward implementation of the 
reduction runs the risk of requiring running time $O(2^m)$ 
per iteration. Fortunately, there is an implementation requiring only $\mathrm{poly}(m)$ running time per iteration. The key to avoiding the exponential dependence on $m$ is, first of all, to store the vectors $\ell^t(x^t,y^t)$, which are only $(m+2)$-dimensional, rather than the exponentially higher-dimensional loss vectors $\tilde{\ell}(x^t,y^t).$ However, the algorithm still needs to compute 
\[
    x^t \in \arg \min_{x \in X^t} \max_{y \in Y^t}
    \langle \chi^t, \, \tilde{\ell}^t(x,y) \rangle 
\]
where $\chi^t$ is a $d$-dimensional vector with coordinates
\[
\chi^t_j = \frac{\exp(\eta \sum_{s=1}^{t-1} \tilde{\ell}_j^s(x^s,y^s))}{\sum_{i \in [d]} \exp(\eta \sum_{s=1}^{t-1} \tilde{\ell}_i^s(x^s,y^s))}.
\]
Expanding out the inner product $\langle \chi^t, \, \tilde{\ell}^t(x,y) \rangle$ in the definition of $x^t$, we find that 
\begin{equation} \label{eq:bigsums}
    x^t \in \arg \min_{x \in X^t} \max_{y \in Y^t}
    \frac{\sum_{j \in [d]} \exp(\eta \sum_{s=1}^{t-1} \tilde{\ell}_j^s(x^s,y^s)) \cdot \tilde{\ell}_j^t(x^t,y^t)}{\sum_{i \in [d]} \exp(\eta \sum_{s=1}^{t-1} \tilde{\ell}_i^s(x^s,y^s))}.
\end{equation}
To compute the sums in the numerator and denominator, recall that for each $j \in [d-1]$ there is a corresponding sign vector $\sigma \in \{\pm 1\}^{m+1}$ such that 
$\tilde{\ell}_j^s(x^s,y^s) = \sum_{k=1}^{m+1} \sigma_k \ell_k^s(x^s,y^s).$ Hence, the sum in the denominator 
of Equation~\eqref{eq:bigsums} simplifies as 
\begin{align} 
\nonumber
    \sum_{i \in [d]} \exp(\eta \sum_{s=1}^{t-1} \tilde{\ell}_i^s(x^s,y^s)) & = 
    \exp(\eta \sum_{s=1}^{t-1} \tilde{\ell}_d^s(x^s,y^s)) 
    + \sum_{\sigma \in \{\pm 1\}^{m+1}}
    \exp(\eta \sum_{s=1}^{t-1} \sum_{k=1}^{m+1} 
    \sigma_k \ell_k^s(x^s,y^s)) \\
\nonumber
    & = 
    \exp(\eta \sum_{s=1}^{t-1} \tilde{\ell}_d^s(x^s,y^s)) 
    + \sum_{\sigma \in \{\pm 1\}^{m+1}}
    \prod_{k=1}^{m+1}
    \exp(\eta \sigma_k \sum_{s=1}^{t-1} \ell_k^s(x^s,y^s)) \\
    & = 
    \exp(\eta \sum_{s=1}^{t-1} \tilde{\ell}_d^s(x^s,y^s)) 
    + \prod_{k=1}^{m+1} \sum_{\sigma_k \in \pm 1}
    \exp(\eta \sigma_k \sum_{s=1}^{t-1} \ell_k^s(x^s,y^s)) 
\label{eq:denominator}
\end{align}
and the formula on the last line can be computed in $O(mt)$
arithmetic operations. In fact, a further running time improvement can be achieved by using dynamic programming to amortize over time steps. At the end of each time step, if we store the quantities $\exp(\eta \sum_{s=1}^{t-1} \tilde{\ell}_d^s(x^s,y^s))$ and 
$\exp(\eta \sigma_k \sum_{s=1}^{t-1} \ell_k^s(x^s,y^s))$
for each $k \in [m+1]$ and $\sigma_k \in \pm 1$, then 
updating these values to incorporate the loss vector from time $s = t$ requires constant time (a single multiplicative update) for each of the $2m+3$ stored values. Evaluating the formula on the last line of Equation~\eqref{eq:denominator} then requires applying only $O(m)$ arithmetic operations to the stored values.

A similar simplification pertains to the numerator in 
Equation~\eqref{eq:bigsums}. To save space, we will ignore the $j=d$ term of the sum, which is a special case that can be computed separately from the terms corresponding to $j \in [d-1]$. As before, each $j \in [d-1]$ corresponds to a sign vector $\sigma \in \{ \pm 1 \}^{m+1}$.
\begin{align}
\nonumber
    \lefteqn{\sum_{\sigma \in \{ \pm 1 \}^{m+1}} 
    \exp \left( \eta \sum_{s=1}^{t-1} \sum_{k=1}^{m+1} 
    \sigma_k \ell_k^s(x^s,y^s) \right)
    \left( \sum_{k'=1}^{m+1} \sigma_{k'} \ell_{k'}^t(x^t,y^t)
    \right)} \\
\nonumber
    & = \sum_{k'=1}^{m+1}
    \sum_{\sigma \in \{ \pm 1 \}^{m+1}}
    \prod_{k=1}^{m+1}
    \exp \left( \eta \sum_{s=1}^{t-1} 
    \sigma_k \ell_k^s(x^s,y^s) \right)
    \sigma_{k'} \ell_{k'}^t(x^t,y^t) \\
\label{eq:numerator}
    & = \sum_{k'=1}^{m+1}
    \left( \sum_{\sigma_{k'} \in \{ \pm 1 \}}
        \exp \left( \eta \sigma_{k'} \sum_{s=1}^{t-1} 
        \ell_{k'}^s(x^s,y^s) \right) \sigma_{k'}
        \ell_{k'}^t(x^t,y^t) \right)
    \cdot
    \prod_{k \neq k'} 
    \left( \sum_{\sigma_{k} \in \{ \pm 1 \}}
        \exp \left( \eta \sigma_{k} \sum_{s=1}^{t-1} 
        \ell_{k}^s(x^s,y^s) \right)
    \right) .
\end{align}
As before, using dynamic programming the
expression on the last line can be 
computed using only $O(m)$ arithmetic operations per 
time step.

\end{document}